\documentclass{article}

\PassOptionsToPackage{bookmarks=true}{hyperref}
\usepackage[table]{xcolor}

\usepackage[preprint]{corl_2026} % Uncomment for pre-prints (e.g., arxiv); This is like ``final'', but will remove the CORL footnote.

\usepackage{times}

\usepackage{amsmath}
\usepackage{amssymb}
\usepackage{amsthm}
\usepackage[numbers]{natbib}
\usepackage{multicol}
\usepackage{cleveref}
\usepackage{booktabs}
\usepackage{graphicx} % Required for inserting images
\usepackage{tabularx}
\usepackage{array}
\newcolumntype{Y}{>{\centering\arraybackslash}X}
\usepackage{multirow}
\usepackage{enumitem}
\usepackage{listings}

\usepackage{colortbl}
\usepackage[frozencache,cachedir=.]{minted}
\usepackage{xspace}

\usepackage[english]{babel}
\usepackage[utf8]{inputenc}
\usepackage{algorithm}
\usepackage[noend]{algpseudocode}
\usepackage{mathtools}

\usepackage{bm}
\usepackage{wrapfig}
\usepackage{tikz}
\usepackage{pgfplots}
\pgfplotsset{compat=newest}
\usepackage{verbatim}
\usepackage{flushend}
\usepackage[font=footnotesize]{caption}
\usepackage{subcaption}
\usepackage{makecell}

\newcommand{\cD}{\mathcal{D}}

\newcommand{\cL}{\mathcal{L}}

\newcommand{\cU}{\mathcal{U}}

\newcommand{\cX}{\mathcal{X}}

\newcommand{\EE}{\mathbb{E}}

\newcommand{\RR}{\mathbb{R}}

\newcommand{\defeq}{\vcentcolon=}

\newtheorem{theorem}{Theorem}

\newtheorem*{remark*}{Remark}

\newtheorem{proposition}{Proposition}

\colorlet{revision_color}{black}

\definecolor{miracleolive}{RGB}{85,107,47} % olive drab-ish
\newcommand{\miracle}[1]{\textcolor{miracleolive}{#1}}

\title{$\lambda$-Reachability: Geometric-Horizon Safety Bellman Equations for Humanoid Safety}

\author{
  Rui Chen\\
  Robotics Institute\\
  Carnegie Mellon University \\
  United States\\
  \texttt{ruic3@andrew.cmu.edu} \\
  \And
  Shangtao Li\\
  Mechanical Engineering\\
  Carnegie Mellon University \\
  United States\\
  \texttt{shangtal@andrew.cmu.edu} \\
  \And
  Yifan Sun\\
  Robotics Institute\\
  Carnegie Mellon University \\
  United States\\
  \texttt{yifansu@andrew.cmu.edu} \\
  \And
  Changliu Liu\\
  Robotics Institute\\
  Carnegie Mellon University \\
  United States\\
  \texttt{cliu6@andrew.cmu.edu} \\
}

\begin{document}
\maketitle

\vspace{-10pt}

\begin{figure}[h]
    \centering
    \includegraphics[width=0.8\linewidth]{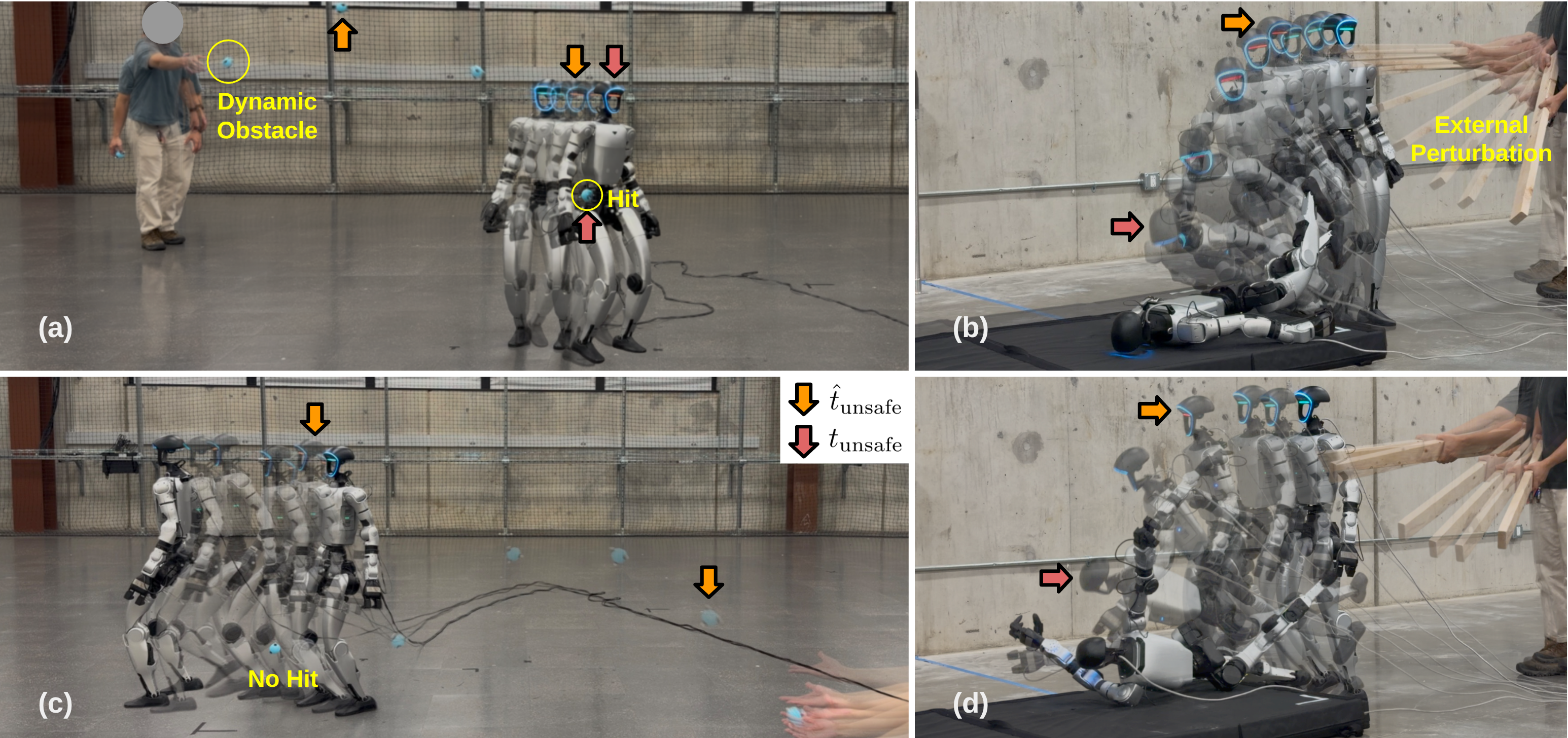}
    \caption{Safety value inference on hardware. A safety value $V^\pi$ is learned to predict the state-wise worst-over-future safety signal $\ell$, which models collision avoidance in task (a,c) and stable locomotion in task (b,d). The frame $\hat{t}_\mathrm{unsafe}$ when $V^\pi$ predicts the first non-recoverable state (orange) and frame $t_\mathrm{unsafe}$ when the actual unsafe condition occurs (red) are both marked. The gap between $\hat{t}_\mathrm{unsafe}$ and $t_\mathrm{unsafe}$ is the predictive horizon of $V^\pi$ for unsafe states. The headlight turns red at $\hat{t}_\mathrm{unsafe}$ when $V^\pi$ predicts future safety violations.}
    \label{fig:teaser}
\end{figure}
\vspace{-15pt}

\begin{abstract}
We introduce \emph{$\lambda$-Reachability}, a scalable approach to Hamilton--Jacobi safety analysis for high-dimensional robotic systems.
Unlike prior discounted formulations that rely on fixed one-step Bellman updates, $\lambda$-Reachability employs a \emph{stochastic multi-step estimator} of the safety value, using a geometrically distributed rollout horizon together with a randomly absorbed terminal.
Conceptually analogous to TD($\lambda$), $\lambda$-Reachability interpolates between local self-consistency updates and long-horizon max-over-trajectory safety targets via an interpretable horizon-control parameter.
Unlike TD($\lambda$), where the terminal value is always incorporated in learning targets, the terminal safety value in $\lambda$-Reachability is only used at a probability controlled by parameter $\delta$.
We formally show that for $\delta<1$, the update induces a contraction mapping that allows temporal-difference learning; as $\lambda \to 1$, the estimator recovers the undiscounted reachability objective.
We apply $\lambda$-Reachability to high-dimensional safety learning problems with both simulated and real humanoid robots under balance and collision avoidance constraints.
Experimental results demonstrate that $\lambda$-Reachability significantly improves both safe-set boundary classification and safety margin estimation compared to single-step temporal-difference baselines.
\end{abstract}

% Two or three meaningful keywords should be added here
\keywords{Reachability Analysis, Reinforcement Learning, Humanoids} 

%===============================================================================

% \section{Introduction}\label{sec:intro}

% \ruic{Random discussions for now -}

% In the literature, \textit{safety value functions} have been proposed to either identify feasible states from which the system can stay safe \citep{ganai2024hamilton}, or derive control constraints that push the system inside safety boundaries \citep{liu2014control,ames2014control}.
% The synthesis of such value functions, however, is in general intractable beyond low-dimensional tasks \citep{chen2023sis,chen2023sia,fisac2019bridging}.

% The balance between bias and variance has traditionally been addressed in return-maximizing problems using TD($\lambda$), which computes a weighted sum of value targets across varying bootstrap steps.
% Unfortunately, the same technique does not directly apply to our case, since we are computing a maximum target rather than an accumulative one.

\section{Introduction}

Robots operating in the real world must reason about \emph{persistent} safety: whether a policy will ever violate constraints, and how close it is to doing so.
A common formalization uses an instantaneous safety signal, with its positivity indicating a violation, and defines a policy's safety value as the worst future signal.
Hence, the safety value certifies safety for all future time, with the sign indicating whether persistent safety is reachable and the magnitude acting as a safety margin.
This worst-over-time notion is widely used as the foundational quantity for safety assessment \citep{ganai2024hamilton}.

% \changliu{Based on safety assessment, }
Based on the safety assessment, a wide range of work focuses on \emph{safe control}: producing actions that comply with safety constraints.
Safe set algorithms (SSA) \citep{liu2014control,liu2022safe} and control barrier functions (CBF) \citep{ames2019cbf,wang2023high} provide an optimization-based framework to enforce forward invariance within a safe set while retaining task performance.
Hamilton-Jacobi (HJ) reachability-based methods \citep{fisac2019generalSafety,ganai2024hamilton} recover the safety value by solving a variational inequality via dynamic programming.
% and have recently been extended to interactive scenarios such as human-robot collaboration \citep{pandya2025slide}. \changliu{no need to say this has been extended to HRC, since SSA and CBF has already been widely used in HRC. Instead, we can say all these approaches have been applied to solve xx, xx, xx problems (list real applications)}
Those safe control approaches have been applied to real-world applications such as quadruped navigation \citep{hsu2023sim, yun2024safe}, collaborative manipulation \citep{pandya2025slide}, and autonomous driving \citep{leung2020infusing}.
To ultimately achieve safe control, the correctness of the safety value---both the sign indicating safety and the magnitude encoding safety margin---is an essential prerequisite, as it directly quantifies the safety performance.
% \changliu{It needs a stronger reason to say "Hence, ... for fixed policies..." And also the methods reviewed in the following paragraph are not all for fixed policies. It may be better to not mention fixed policies here, but mention in the paragraph when we introduce $\lambda$-Reachability. }
Only with accurate safety values, one is able to derive appropriate safe control operations, such as activating failure-recovery policies based on safety margins \citep{he2024agile} and applying safe control constraints based on value gradients \citep{chen2023sia,yun2024safe}.
Hence, this work focuses on acquiring accurate safety values for policies as a critical step in safety analysis and control.

Prior work that explicitly \emph{synthesizes} or \emph{learns} the underlying safety value with formal structure is often limited to relatively low-dimensional state descriptions.
Safety Index Synthesis constructs a safety index by solving sum-of-squares (SOS) programs \citep{chen2023sis,chen2023sia,zhao2022sis}, but SOS formulations rely on semi-definite optimization whose size grows rapidly with state dimension and polynomial degree, leading to poor scalability \citep{ahmadi2019dsos}.
Neural CBF learning methods aim to fit barrier functions from data \citep{lindemann2021hybridcbf}, but unlike SOS- or PDE-based constructions whose constraints hold by design, learned neural barrier functions must satisfy barrier inequalities over a continuous state space; since training typically enforces these conditions only on sampled states, additional verification is required to establish safety guarantees \citep{vertovec2025lbp,zhang2024seev}.
Similarly, HJ reachability computes a viscosity solution of the HJ (variational inequality) PDEs on a grid.
However, standard solvers suffer from the curse of dimensionality---memory and computation scale exponentially with state dimension \citep{bansal2017hjreachability,darbon2016curse}.
These limitations are calling for new methods to learn safety value functions that can operate directly in high-dimensional robotic and perceptual representations.

To address the issue of high dimensionality, several works reformulate safety problems through \emph{discounted} safety Bellman operators that admit contraction properties.
This enables the use of reinforcement learning techniques, such as TD/Q-learning with function approximation \citep{fisac2019bridging,hsu2021reachavoid}, to learn safety values from data.
\citet{ganai2023iterative} learns reachability estimators to support safe policy optimization, while \citet{ganai2024hamilton} summarizes the growing connections between HJ reachability and reinforcement learning.
Notably, all existing works build on the safety Bellman equation with a one-step bootstrapped value target. 
In high-dimensional settings with rich perception (e.g., height maps) and complex environment dynamics, one-step bootstrapping can be unstable since approximation errors can be propagated through noisy transitions.
The value learning can also be slow, since future safety information propagates by only one step per update.
% leading to overly conservative predictions with misclassified safe states and inaccurate safety margins.
% \changliu{justify why these are required}
% To correctly apply safety-enforcing control constraints at appropriate timings, \emph{both} accurate feasible-boundary identification and accurate margin estimation are required.
To avoid these drawbacks, one may consider removing bootstrapping entirely and learning from Monte Carlo-style targets instead.
However, that requires full task trajectories, which can be costly to collect in high-dimensional real-world environments.
It is also desirable to learn safety values while the system is in operation, instead of waiting for full rollouts which defers learning.
In summary, we aim to efficiently learn safety value functions that accurately identify feasible boundaries and estimate safety margins, even from partial trajectory data.

% \changliu{why is it off-policy?}

Motivated by these challenges, we propose \emph{$\lambda$-Reachability}, a TD$(\lambda)$-style approach \citep{sutton1988td} for safety value learning that replaces sensitive one-step backups with a geometric mixture of multi-step max targets.
By sampling a random rollout horizon and backing up the maximum observed safety signal over that horizon, $\lambda$-Reachability interpolates smoothly between short-horizon bootstrapping and Monte Carlo max targets, while providing an explicit knob ($\lambda$) that controls the effective look-ahead horizon.
Instead of deterministically bootstrapping from the current safety value estimate, we do so only with a probability $\delta^n$ that decreases as the look-ahead horizon $n$ extends.
We formally prove that the resulting stochastic multi-step safety Bellman equation induces a contraction mapping when $\delta<1$, thereby allowing temporal-difference solutions and recovering the undiscounted safety value in expectation as $\lambda\to 1$.
We also interpret our approach as a connection between CBF and HJ-based approaches in terms of safety value learning.
Finally, $\lambda$-Reachability is applied to high-dimensional robotic tasks using perceptual observations to recover safety values for balance and collision-avoidance constraints.
Results in both simulations and on physical hardware show improved boundary and margin accuracy compared to one-step baselines.

% To proceed, \Cref{sec:problem} introduces the safety value learning task as well as the discounted safety Bellman equation as first viable approach to high-dimensional problems.
% \Cref{sec:method} describes $\lambda$-Reachability and presents theoretical analysis and interpretations.
% \Cref{sec:experiment} verifies our approach against baselines and ablations on various value learning tasks.
% We cover limitations in \Cref{sec:limitation} and conclude this paper in \Cref{sec:conclusion}, focusing on the implications of $\lambda$-Reachability on the application of a broader range of RL techniques to safety problems.

\section{Background}\label{sec:problem}

% \subsection{Dynamics}

% We consider a discrete-time dynamic system.
% Let $x\in\cX\subset \RR^{N_x}$ be the system state
% and $u\in\cU$ be the control.
% The control space $\cU$ is bounded element-wise.
% % i.e., $\cU\defeq\{u\in\RR^{N_u} \mid \underline{u} \leq u \leq \bar{u} \}$.
% The dynamics is given by $x_{t+1} = f(x_t, u_t), ~ u_t \in \cU$
% % \begin{equation}\label{eq:dynamics}
% %     x_{t+1} = f(x_t, u_t), ~ u_t \in \cU,
% % \end{equation}
% where $f: \RR^{N_x\times N_u} \mapsto \RR^{N_x}$ is locally Lipschitz continuous.

\subsection{Safety Value Functions for Control Policies}\label{subsec:learn_safety_value}

Let $x\in\cX$ be the system state and $u\in\cU$ the control.
During operation, the system should never enter unsafe states such as collisions and off-balance poses.
% \changliu{this is quite humanoid specific, do you want to mention humanoids in the beginning?}.
We assume that such a constraint is specified through a \emph{safety signal}
$\ell:\cX\rightarrow\RR$, which defines a safe set we wish the system to stay in $\cX_S \defeq \{x \in \cX \mid \ell(x) \leq 0\}.$
% \begin{equation}
% \cX_S \defeq \{x \in \cX \mid \ell(x) \leq 0\}.
% \end{equation}
Given a control policy $\pi:\cX\rightarrow\cU$, let $\tau^{\pi}_x$ denote the infinite-horizon state trajectory induced by $\pi$ starting from $x$.
If $\ell(\tau^{\pi}_x(t))\le 0$ for all $t$, $\tau^{\pi}_x$ is safe and
the policy induces a forward invariant set \citep{chen2023sis} within $\cX_S$.
% the system is said to be \textit{forward invariant} \citep{chen2023sis} within $\cX_S$.
% \changliu{better to say: the policy induces a forward invariant set within $\cX_S$.}
This work aims to synthesize a safety value function $V^\pi:\cX\rightarrow\RR$ that quantifies the future safety performance of $\pi$ for any $x\in\cX$, defined as
\begin{equation}\label{eq:safety_value}
V^\pi(x_t) \defeq \sup_{k \ge t}\ell(x_k)
\end{equation}
% It follows that the system is forward invariant if $V^\pi(x_t) \le 0$.
It follows that $\{x \mid V^\pi(x) \le 0\}$ is a forward invariant set within $\cX_S$.
% \changliu{better to say: $V^\pi(x_t) \le 0$ is the forward invariant set within $\cX_S$.}
By definition, \eqref{eq:safety_value} also satisfies the \textit{safety Bellman equation}:
\begin{equation}\label{eq:undiscounted_safety_bellman}
V^\pi(x)=\mathrm{max}\{\ell(x), V^\pi(f(x,\pi(x)))\}
\end{equation}

% \begin{equation}\label{eq:value_rl}
% V \defeq \textstyle\sum_{k \ge t} \gamma^{k-t}r(x_t,u_t)
% \end{equation}

% \begin{equation}\label{eq:bellman_rl}
% V^\pi(x_t) = r(x_t,u_t) + \gamma V^\pi(x_{t+1})
% \end{equation}

\subsection{The Discounted Safety Bellman Equation}\label{subsec:discounted_bellman}

Exact synthesis of the value function \eqref{eq:safety_value} has proven intractable for higher dimension problems either by Sum-of-Square programming \citep{zhao2023sos,chen2023sis} or solving Hamilton-
Jacobi-Bellman variational inequality \citep{ganai2024hamilton}.
Unlike standard reward-based Bellman equations, \eqref{eq:undiscounted_safety_bellman}
does not induce a contraction mapping, which makes it unsuitable to derive a learning target directly \citep{fisac2019bridging}.
To address this issue, \citet{fisac2019bridging} introduces a $\gamma$-discounted formulation of the safety objective.
For a discount factor $\gamma\in(0,1)$,
% \changliu{avoid symbol overload here. Use $\gamma$ for the discount. Or are we making these two equal?}
the \emph{discounted safety Bellman equation} is defined as
\begin{equation}\label{eq:discounted_safety_bellman}
V^\pi(x)=(1-\gamma)\ell(x)+\gamma\mathrm{max}\{\ell(x), V^\pi(f(x,\pi(x)))\}
\end{equation}
where $\gamma$ can be interpreted as an episode continuing probability.
This equation induces a contraction mapping under the supremum norm and admits a unique fixed point.
Moreover, as $\gamma \rightarrow 1$, the solution $V^\pi$ to \eqref{eq:discounted_safety_bellman} recovers the original undiscounted safety value in \eqref{eq:safety_value}.
Concretely, the learning is carried out by regressing $V^\pi(x_t)$ towards sampled targets $y_t$:
\begin{equation}\label{eq:dpe_update}
    V^\pi_{k+1}(x_t) = V^\pi_{k}(x_t) + \alpha \left[ y_t - V^\pi_k(x_t) \right]
\end{equation}
where $k$ is the training step and $\alpha$ is the learning rate.
% \changliu{also define the learning rate $\alpha$}
The target $y_t$ is given by
\begin{equation}\label{eq:dpe_target}
    y_t = (1-\gamma)\ell_t+\gamma\mathrm{max}\{\ell_t, V^\pi_{k}(x_{t+1})\},
\end{equation}
with $\gamma$ annealed to $1$ during training.
Hence, \eqref{eq:discounted_safety_bellman} is suitable for being used with temporal-difference techniques to provide a tractable way to synthesize safety value functions \citep{fisac2019bridging}.

\section{Stochastic Multi-step Safety Bellman Equation}
\label{sec:method}

\subsection{Unifying Max Backups with Variable Horizon}

Notice that the original safety objective in \eqref{eq:safety_value} is defined over an infinite horizon, whereas the update rule in \eqref{eq:dpe_update} relies on a one-step bootstrap target at the successor state $x_{t+1}$. 
As a result, the regression target $y_t$ is strongly influenced by the current approximation of $V^\pi(x_{t+1})$, which is typically inaccurate during early stages of training. 
This induces substantial bias in the learning signal and can lead to error propagation through repeated bootstrapping. 
Moreover, such myopic updates are inefficient at propagating safety information backward in time when unsafe events occur many steps after $t$. 
Similar limitations of one-step updates are well documented in the reinforcement learning literature for classic one-step temporal-difference (TD) methods \citep{sutton1988td}.

At the opposite end of the spectrum, Monte-Carlo methods avoid bootstrapping altogether by constructing targets directly from fully observed trajectories. 
In the context of safety value learning, this corresponds to the target $y_t = \max\{\ell_t, \ell_{t+1}, \ldots, \ell_T\}$
% \begin{equation}\label{eq:mc_target}
%     y_t = \max\{\ell_t, \ell_{t+1}, \ldots, \ell_T\},
% \end{equation}
where $T$ denotes the terminal time step. 
Monte-Carlo targets are unbiased with respect to the infinite-horizon objective \eqref{eq:safety_value} and provide accurate supervision when full trajectories are available. 
However, they typically exhibit high variance, require complete rollouts, and cannot always be computed until termination. 
These properties make Monte-Carlo supervision impractical for high-dimensional systems and incompatible with online learning settings, where only partial trajectory segments may be accessible.

Motivated by the complementary strengths and weaknesses of one-step bootstrapping and Monte-Carlo evaluation, our core contribution is to unify learning from max targets with different bootstrap horizons for safety value functions. 
Specifically, we propose a variant of safety Bellman equation with a geometric stopping horizon and stochastic terminal absorption.
The equation smoothly interpolates between short-horizon bootstrapping and long-horizon Monte-Carlo supervision. 
Specifically, given state $x_t$, a continuation probability $\lambda\in[0,1)$ and a terminal survival constant $\delta\in[0,1)$, we define the following \textit{stochastic multi-step safety Bellman equation}
\begin{align}
    V^\pi(x_t) = \EE_{n,s|x_t}\Big[ & \max\big\{ \ell_t,\ell_{t+1},\dots,\ell_{t+n-1}, b_t^{(n)} \big\} \Big] \label{eq:nstep_bellman}
\end{align}
where the stochastic bootstrapped value is given by
\begin{equation}
b_t^{(n)}\defeq s V^\pi(x_{t+n}) + (1-s) v_\mathrm{term}.
\end{equation}
The stopping step $n\ge 1$ follows a geometric distribution $n \sim \mathrm{Geom}(1-\lambda)$ given by
\begin{equation}\label{eq:geometric_dist}
    \mathbb{P}(n=k) = (1-\lambda)\lambda^{k-1}, \quad k \in \mathbb{N}.
\end{equation}

% $n \sim \mathrm{Geom}(1-\lambda)$

The terminal survival event $s \sim \mathrm{Bernoulli}(\delta^n)$ bootstraps on the current value estimation with a probability that decreases as the horizon grows.
$v_\mathrm{term}$ is a lower bound of $\ell$ so that when $s=0$, the bootstrapped value at $t+n$ is effectively ignored.
% \changliu{Not necessarily need to be a lower bound. Need to provide more discussion on the choice of $v_\mathrm{term}$.}
Importantly, although \eqref{eq:nstep_bellman} does not explicitly include a discount factor, it induces a contraction mapping which allows temporal-difference approaches to converge to a fixed point.
As $\lambda\to 1$, \eqref{eq:nstep_bellman} also converges to the undiscounted objective \eqref{eq:safety_value}.
To show those, we first define the stochastic multi-step safety Bellman operator
\begin{equation}\label{eq:lambd_operator}
T_{\lambda,\delta}[V](x_t)\defeq\EE_{n,s|x_t}\Big[ \max\big\{\ell_t,\dots,\ell_{t+n-1}, b_t^{(n)} \big\} \Big].   
\end{equation}
We then present the theoretical results as follows.
See Appendix \ref{append:proof_contraction} for proof.

% For a control policy $\pi$, given a safety signal $\ell$ and the safety value $V^\pi$ \eqref{eq:safety_value}, 

\begin{theorem}[Contraction Mapping]\label{thm:contraction}
    The stochastic multi-step safety Bellman equation induces a contraction mapping under the supremum norm for $\lambda\in[0,1)$ and $\delta\in[0,1)$. Namely, for $V,\tilde{V}:\cX\to\RR$, there exists a constant $\rho\in[0,1)$ such that $\|T_{\lambda,\delta}[V]-T_{\lambda,\delta}[\tilde{V}]\|_\infty \le \rho\|V-\tilde{V}\|_\infty$.
\end{theorem}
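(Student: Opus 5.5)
Fix $x_t$ and two value functions $V,\tilde V$, and compare $T_{\lambda,\delta}[V](x_t)$ and $T_{\lambda,\delta}[\tilde V](x_t)$ by coupling the randomness. For the deterministic policy dynamics the trajectory $x_t,x_{t+1},\dots$ and hence $\ell_t,\ell_{t+1},\dots$ are fixed given $x_t$. We use the same draw of $(n,s)$ in both expectations, so the only difference between the two integrands is the bootstrapped term $b_t^{(n)}$ versus $\tilde b_t^{(n)}$. If the transition $f$ is stochastic, the same argument goes through by also coupling the trajectory; only $\sup$-norm bounds are used, so nothing changes.

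The first key step is the elementary nonexpansiveness of the max: for any reals $a_1,\dots,a_m$ and $b,\tilde b$,
\[
\big|\max\{a_1,\dots,a_m,b\}-\max\{a_1,\dots,a_m,\tilde b\}\big|\le |b-\tilde b|.
\]
Applying this inside the expectation gives
\[
|T_{\lambda,\delta}[V](x_t)-T_{\lambda,\delta}[\tilde V](x_t)| \le \EE_{n,s}\big[|b_t^{(n)}-\tilde b_t^{(n)}|\big].
\]
When $s=0$ both bootstrapped values equal $v_\mathrm{term}$, so the difference vanishes. When $s=1$ the difference is $|V(x_{t+n})-\tilde V(x_{t+n})|\le\|V-\tilde V\|_\infty$. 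Conditioning on $n$ and using $\mathbb P(s=1\mid n)=\delta^n$, the right-hand side is at most $\EE_n[\delta^n]\,\|V-\tilde V\|_\infty$.

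The second step is to compute the geometric moment:
\[
\EE_n[\delta^n]=\sum_{k\ge1}(1-\lambda)\lambda^{k-1}\delta^k=\frac{(1-\lambda)\delta}{1-\lambda\delta}.
\]
Set $\rho\defeq (1-\lambda)\delta/(1-\lambda\delta)$. Since $\lambda\delta<1$ the series converges. Moreover $\rho<1$ iff $(1-\lambda)\delta<1-\lambda\delta$, i.e. $\delta<1$, which holds; and $\rho\ge0$. Taking the supremum over $x_t$ yields the claimed bound with $\rho\in[0,1)$ independent of $x_t$.

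The main obstacle is not the inequality itself but making the operator well defined so that the coupling and expectation manipulations are legitimate. The expectations must be finite, which needs $V,\tilde V$ bounded, $\ell$ bounded along trajectories, and $v_\mathrm{term}$ finite. I would state these assumptions explicitly: work on the Banach space of bounded functions with $\ell$ bounded. The max over the finite prefix is then integrable, and the exchange of the absolute value with the expectation (Jensen's inequality) is justified. Note also that the bound does not need $v_\mathrm{term}$ to be a lower bound of $\ell$; that property matters only for interpreting the fixed point.
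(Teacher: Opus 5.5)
Your proof is correct and follows the same route as the paper's. Both remove the $s=0$ branch (the paper does this by expanding the Bernoulli expectation into $\delta^n\max(m_t^{(n)},V_{t+n})+(1-\delta^n)\max(m_t^{(n)},v_\mathrm{term})$), apply nonexpansiveness of the max, and bound the result by $\EE_n[\delta^n]\,\|V-\tilde V\|_\infty$; your closed form $\rho=(1-\lambda)\delta/(1-\lambda\delta)$ and your boundedness assumptions make explicit the step the paper only asserts, namely that $\rho\in[0,1)$ uniformly in $x_t$.
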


\begin{proposition}[Value Approximation]\label{prop:value_approx}
    In the limit of $\lambda\to 1$, the fixed-point solution to \eqref{eq:nstep_bellman} converges to the undiscounted safety value \eqref{eq:safety_value}.
\end{proposition}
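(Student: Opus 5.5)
The plan is to trap the fixed point $W_\lambda$ of $T_{\lambda,\delta}$ between two bounds: an upper bound $W_\lambda \le V^\pi$ valid for every $\lambda$, and a lower bound that tends to $V^\pi$ as $\lambda\to 1$. Throughout, $\delta\in[0,1)$ is fixed. We assume $\ell$ is bounded, so that $V^\pi$ in \eqref{eq:safety_value} is finite and the sup-norm framework of Theorem~\ref{thm:contraction} applies. By that theorem, $W_\lambda$ exists, is unique, and equals $\lim_{k\to\infty} T_{\lambda,\delta}^k[V_0]$ for any bounded $V_0$.

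\textbf{Upper bound.} First I will check that $T_{\lambda,\delta}$ is monotone: if $V\le \tilde V$ pointwise, then $b_t^{(n)}$ is pointwise no larger for each realization of $(n,s)$. Since $\max$ and $\EE$ preserve order, $T_{\lambda,\delta}[V]\le T_{\lambda,\delta}[\tilde V]$. Next I show $T_{\lambda,\delta}[V^\pi]\le V^\pi$. Fix any realization $(n,s)$.
\begin{itemize}
\item If $s=1$, unrolling \eqref{eq:undiscounted_safety_bellman} $n$ times gives $\max\{\ell_t,\dots,\ell_{t+n-1},V^\pi(x_{t+n})\}=V^\pi(x_t)$.
\item If $s=0$, the bootstrapped term is $v_\mathrm{term}\le \ell_t$, so the maximum equals $\max\{\ell_t,\dots,\ell_{t+n-1}\}\le V^\pi(x_t)$.
\end{itemize}
Taking expectations gives $T_{\lambda,\delta}[V^\pi]\le V^\pi$. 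Iterating with monotonicity yields $T^k_{\lambda,\delta}[V^\pi]\le V^\pi$ for all $k$, and passing to the limit gives $W_\lambda\le V^\pi$.

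\textbf{Lower bound.} For every realization, the maximum inside \eqref{eq:lambd_operator} is at least $\max\{\ell_t,\dots,\ell_{t+n-1}\}$, whatever the value of $b_t^{(n)}$. Applying this to the fixed-point identity $W_\lambda=T_{\lambda,\delta}[W_\lambda]$ gives
\[
W_\lambda(x_t)\ \ge\ \EE_n\big[\max\{\ell_t,\dots,\ell_{t+n-1}\}\big].
\]
Now fix $K\ge 1$ and write $M_K(x_t)\defeq\max_{0\le k<K}\ell_{t+k}$. On the event $\{n\ge K\}$, which by \eqref{eq:geometric_dist} has probability $\lambda^{K-1}$, the inner maximum is at least $M_K(x_t)$. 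On the complementary event it is at least $\ell_t$. Hence
\[
W_\lambda(x_t)\ \ge\ \lambda^{K-1}M_K(x_t)+(1-\lambda^{K-1})\,\ell_t .
\]
Letting $\lambda\to1$ gives $\liminf_{\lambda\to1}W_\lambda(x_t)\ge M_K(x_t)$. Letting $K\to\infty$, $M_K(x_t)\uparrow \sup_{k\ge t}\ell_k=V^\pi(x_t)$. Combined with the upper bound, $W_\lambda(x_t)\to V^\pi(x_t)$ pointwise. This limit does not depend on $\delta$, which only affects how quickly the terminal bootstrap is discarded.

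\textbf{Main obstacle.} The delicate step is the upper bound. It relies on monotonicity of $T_{\lambda,\delta}$ and on the assumption that $v_\mathrm{term}$ lower-bounds $\ell$; without that assumption the $s=0$ branch could push the target above $V^\pi$. I will state the boundedness of $\ell$ explicitly, because it is needed to make $V^\pi$ a valid element of the space on which Theorem~\ref{thm:contraction} is applied. If a uniform rather than pointwise statement is wanted, the lower-bound step does not give it directly, because the index $K$ at which $M_K$ approaches $V^\pi$ can depend on $x$. In that case I would restrict to bounded $\ell$ whose trajectory suprema are attained within a uniformly bounded horizon, or else be content with pointwise convergence.
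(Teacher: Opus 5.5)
The paper never actually proves Proposition~\ref{prop:value_approx}. Appendix~\ref{append:proof_contraction} contains only the contraction argument for Theorem~\ref{thm:contraction}, and the main text only gestures at the intuition that $\EE[n]=1/(1-\lambda)\to\infty$ while the bootstrap is used with probability $\delta^n$. Your sandwich argument is correct and supplies the missing proof.

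It also handles the point that intuition glosses over: the bootstrapped object is the fixed point $W_\lambda$ itself, which moves with $\lambda$. Monotonicity plus $T_{\lambda,\delta}[V^\pi]\le V^\pi$ gives the $\lambda$-uniform bound $W_\lambda\le V^\pi$. Your lower bound discards the bootstrap altogether, which is why $\delta$ drops out of the limit.

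A route closer to the paper's only proof would reuse its contraction factor, which is explicitly $\rho_\lambda=\EE[\delta^n]=(1-\lambda)\delta/(1-\lambda\delta)\to0$. Write $m_t^{(n)}\defeq\max_{t\le k\le t+n-1}\ell_k$ and assume $v_\mathrm{term}\le\inf\ell$. Then the fixed-point identity, together with $W_\lambda\le\sup\ell$, gives $0\le W_\lambda(x_t)-\EE_n[m_t^{(n)}]\le\rho_\lambda(\sup\ell-\inf\ell)$ for every $x_t$. The two routes trade off as follows:
\begin{itemize}
\item The contraction route gives an explicit uniform rate for the bootstrap bias. It leaves only the geometric-horizon Monte Carlo error, which your $K$-truncation step handles.
\item Your route needs no quantitative control of $W_\lambda$ and yields the clean sandwich $\EE_n[m_t^{(n)}]\le W_\lambda\le V^\pi$.
\end{itemize}

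The same decomposition shows your hypothesis on $v_\mathrm{term}$ is essentially sharp. With no relation between $v_\mathrm{term}$ and $\ell$, the fixed point tends pointwise to $\max\{V^\pi,v_\mathrm{term}\}$. So what is really needed is $v_\mathrm{term}\le V^\pi$, and that is all your $s=0$ case actually requires.

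Your caveat about uniformity is genuine rather than an artifact of the proof. Take $x_j\mapsto x_{j-1}$ for $j\ge1$ with $x_0$ absorbing, $\ell(x_j)=-1$ for $j\ge1$, $\ell(x_0)=1$, and $v_\mathrm{term}\le-1$. Then $V^\pi\equiv1$, whereas the bound above gives $W_\lambda(x_j)\le 2\lambda^j-1+2\rho_\lambda$. Hence $\|W_\lambda-V^\pi\|_\infty\ge 2-2\rho_\lambda$, which stays bounded away from zero as $\lambda\to1$. Pointwise convergence, as you prove it, is the correct reading of the proposition.
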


Hence, \eqref{eq:nstep_bellman} has a unique fixed-point (by \Cref{thm:contraction}) which solves the objective \eqref{eq:safety_value} as $\lambda\to 1$ (by \Cref{prop:value_approx}).
These properties justify the learning of $V^\pi$ by regression towards
\begin{equation}\label{eq:nstep_target}
y_t^{(n)} \defeq \max\left\{
\ell_t,\ell_{t+1},\dots,\ell_{t+n-1}, b_t^{(n)}
\right\}
\end{equation}
which is a stochastic multi-step estimator that integrates safety signals observed along the trajectory segment while retaining the ability to bootstrap from the current value approximation at the stopping state. 
We refer to this learning procedure as \emph{$\lambda$-Reachability}, in analogy to TD($\lambda$) methods for cumulative rewards.
% \eqref{eq:nstep_target} is an unbiased estimator of the true objective, while avoiding the instability associated with deterministic one-step bootstrapping.
Notably, the bootstrap horizon $n$ above is unbounded, whereas in practice we only have access to finite-length policy rollouts.
Hence, for practical implementation, we instead sample $n$ from a truncated geometric distribution.
See Appendix \ref{append:lambda_training} for full training implementations.

\vspace{-5pt}

% \ruic{REVISE}

% \begin{theorem}[Unbiased Estimation]\label{thm:equiv_discounted}
%     For a control policy $\pi$, given a safety signal $\ell$ and the safety value $V^\pi$ \eqref{eq:safety_value}, the target $y_t^{(n)}$ \eqref{eq:nstep_target} with a geometric horizon $n\sim\mathrm{Geom}(1-\lambda)$ for $\lambda\in[0,1]$ satisfies:
%     \begin{equation}
%         \EE[{y_t^{(n)}}] = (1-\lambda)\ell(x_t)+\lambda\max\{\ell(x_t), \EE[{y_{t+1}^{(n)}}]\}
%     \end{equation}
% \end{theorem}
% \begin{proof}
%     xxx
% \end{proof}

% As a result, \eqref{eq:nstep_target} is an unbiased stochastic estimator of the corresponding $\lambda$-discounted safety value, while avoiding the instability associated with deterministic one-step bootstrapping. 

% \changliu{We can use this formulation to build a connection between CBF and HJ. }

% \ruic{More comments on unlocking a family of RL techniques with smooth bias-variance trade-offs}

\subsection{Interpretations}\label{sec:interpretations}

% \subsubsection{Relation to Discounted Safety Bellman Equation}

\textbf{Explicit Safety Horizon.}
The expected stopping length is $\mathbb{E}[n] = \frac{1}{1-\lambda}$
% \begin{equation}\label{eq:expected_n}
% \mathbb{E}[n] = \frac{1}{1-\lambda}
% \end{equation}
which increases monotonically as $\lambda \to 1$. 
$\lambda$ provides an explicit mechanism for tuning the safety horizon: smaller values emphasize near-term safety, while larger values aggressively propagate long-horizon safety information. 
This control can be particularly useful when different notions of safety prediction are needed.

% \ruic{shrink this and add analysis of source of contraction: DPE from explicit value discount, lambda from random inclusion of terminal bootstrap in expection}

\textbf{Relation to Discounted Safety Bellman.}
Two properties must hold to allow temporal-difference learning: (i) a unique fixed point induced by a contraction for training stability and (ii) consistency with the true safety value for value approximation accuracy.
While both \eqref{eq:discounted_safety_bellman} and \eqref{eq:nstep_bellman} satisfy these conditions, they differ fundamentally in how contraction is induced.
The discounted Bellman induces contraction via explicit value discounts \eqref{eq:discounted_safety_bellman}, requiring $\gamma < 1$.
Meanwhile, accurate value approximation is achieved in the limit $\gamma \to 1$ \citep{fisac2019bridging}.
Hence, the training stability and approximation accuracy are inherently coupled.
In contrast, $\lambda$-Reachability \eqref{eq:nstep_bellman} incorporates the bootstrapped value with probability $\delta^n$, inducing a contraction mapping as long as $\delta<1$.
% in expectation for any $\delta < 1$, while converging to the undiscounted safety objective as $\lambda \to 1$.
The value approximation accuracy is determined by a different parameter $\lambda$.
Such decoupling allows one to improve value accuracy by annealing $\lambda$ to $1$, while maintaining contraction and training stability by $\delta<1$.

% \subsubsection{Fixed-Point Solution and Value Approximation}
% Two properties must hold to allow temporal-difference learning of value functions: (a) unique fixed-point and (b) value approximation by definition.
% While these properties are satisfied by both the discounted safety Bellman equation \eqref{eq:discounted_safety_bellman} and our stochastic multi-step variant \eqref{eq:nstep_bellman}, their dependencies on hyper-parameters differ.
% \eqref{eq:discounted_safety_bellman} is contractive if $\gamma<1$, while it approximates the value function only when $\gamma\to 1$ \citep{fisac2019bridging}, making the two properties coupled.
% As $\gamma\to 1$, the value approximation becomes more accurate at the cost of weaker contraction.
% In our case, \eqref{eq:nstep_bellman} is contractive as long as $\delta < 1$ and approximates the value function accurately if $\lambda\to 1$.
% With decoupled properties, \eqref{eq:nstep_bellman} may accurately approximate the value function by annealing $\lambda$ to $1$, while keeping $\delta < 0$ to preserve the contraction property.

\textbf{Relation to CBF and HJ.}
For safety value learning, CBF–based approaches~\citep{lindemann2021hybridcbf} directly exploit invariance labels that encode infinite-horizon safety, consistent with \eqref{eq:safety_value}. 
These methods learn the persistent safety, but do not explicitly model how safety information propagates across timesteps. 
In contrast, HJ reachability approaches~\citep{ganai2024hamilton} rely on one-step self-consistency conditions \eqref{eq:undiscounted_safety_bellman} and propagate safety information through bootstrapping. 
While effective, purely local updates can be slow in high-dimensional settings.
In this regard, $\lambda$-Reachability unifies these two paradigms via observing future safety signals with a stochastic look-ahead horizon, and interpolating between global invariance supervision and local HJ consistency.
From \eqref{eq:nstep_bellman}, one can recover CBF learning by fixing $s=0$ and using trajectory-specific horizons $n$,
% \changliu{a fixed horizon (?) Finite makes it sound biased }
and recover HJ learning by fixing $s=1$ and $n=1$.

\vspace{-10pt}

% \vspace{-10pt}

\section{Experiments}
\label{sec:experiment}

\subsection{Tasks}
\label{sec:exp_task}

% The dataset for each safety value learning problem is constructed through the following three steps.

\textbf{1) Policy training.}
Given a task, we first train a policy $\pi$ using task-specific objectives (e.g., velocity tracking).
Note that a safety value can be learned for an arbitrary policy.
% A safety signal is not involved in this step yet.

\textbf{2) Policy rollout.}
Given $\pi$, we collect a set of finite-length rollouts $\{\tau^\pi\}$ from the safety value learning environment.
 % is independent from policy training, but it is one in which a safety signal $\ell(x)$ is evaluated.
% $\ell(x) > 0$ indicates safety violation.
% To ensure meaningful coverage of both safe and unsafe scenarios, we initiate each episode with a random event that actively perturbs the robot towards unsafe regions.
To collect unsafe scenarios, we sample a random event that perturbs the robot towards unsafe regions at the start of each episode.
% The event will not repeat during the episode.
If the episode ends in a safe state, the robot is assumed to remain safe with a sufficiently trained policy.
% to preserve safety in the absence of perturbations.
% \changliu{this is based on the assumption that the policy is well trained.}
This enables us to label infinite-horizon safety metrics (i.e., signs and safety margins) from finite trajectories for evaluation.
In practice, such an assumption may not hold, making safety labels empirical estimates that can be biased.

\textbf{3) Safety labeling.}
For each state $x_t \in \tau^\pi$, we first compute the safety signal $\ell(x)$.
Then, we define the forward-invariance indicator $c_t \triangleq \mathbf{1}\{\ell(x_k) \le 0,\ \forall k \ge t\}$.
% \begin{equation}
% c_t \triangleq \mathbf{1}\{\ell(x_k) \le 0,\ \forall k \ge t\},
% \end{equation}
% which indicates whether safety is maintained for the remainder of the rollout.
Since rollouts are finite, $c_t$ serves as the best available estimate of forward invariance, rather than the true infinite-horizon label.
Second, we define the empirical safety value $\bar{V}^{\pi}(x_t) \triangleq \max_{k \ge t} \ell(x_k)$.
% \begin{equation}
% \bar{V}^{\pi}(x_t) \triangleq \max_{k \ge t} \ell(x_k),
% \end{equation}
% which corresponds to the worst observed safety signal along the rollout.
The resulting dataset is $\mathcal{D} \triangleq \{(x_t, \ell_t, c_t, \bar{V}^{\pi}(x_t))\}_{t}$.
% \begin{equation}
% \mathcal{D} \triangleq \{(x_t, \ell_t, c_t, \bar{V}^{\pi}(x_t))\}_{t}.
% \end{equation}
Tasks used in this work are defined below.
See \Cref{fig:envs} for visualizations.

% \begin{itemize}
%     \item
\textbf{\textsc{G1Flat}}.
We train a locomotion policy $\pi$ for a Unitree G1 humanoid robot with 37 DoF in a flat environment with velocity-tracking goals.
In this task, we consider a safety signal $\ell_\mathrm{flat}$ to indicate the balance status, defined by both root height and body orientation.
% \begin{equation}

% \item 
\textbf{\textsc{G1Rough}}.
This task replaces the flat ground in G1Flat with rough terrains with slopes and stairs with scan points as additional input.
Other settings remain the same.

% \item
\textbf{\textsc{G1Collision}}.
This task builds on \textsc{G1Flat} and introduces dynamic obstacles that the robot must avoid while navigating.
To calculate the safety signal $\ell_\mathrm{collision}$, we introduce a distance component $\ell_\mathrm{dist}=\min\{\max\{(d_\mathrm{min}-d)/d_\mathrm{min}, -1\}, 0\}$ and a contact component $\ell_\mathrm{contact}=1$ if the ball hits the robot and $-1$ otherwise.
Hence, we have $\ell_\mathrm{collision} = \max\{\ell_\mathrm{flat}, \ell_\mathrm{dist}, \ell_\mathrm{contact}\}$.

See Appendix \ref{append:training_details} for full environment setup, policy training, and data collection details.
% \begin{equation}\label{eq:safety_signal_flat_avoid}
%     \ell_\mathrm{collision} = \max\{\ell_\mathrm{flat}, \ell_\mathrm{dist}, \ell_\mathrm{contact}\}
% \end{equation}

% \end{itemize}

% See \Cref{fig:envs} for the environments.
% In all tasks, the safety value $V^\pi$ becomes positive (top ball turns red) before actual safety violations ($\ell>0$ and lower ball turns red), as it captures the future safety performance.
% The goal of safety value learning is to recover such $V^\pi$ from the roll out dataset $\cD$ for the given policy $\pi$.

\begin{figure}[t]
    \centering
    \includegraphics[width=\linewidth]{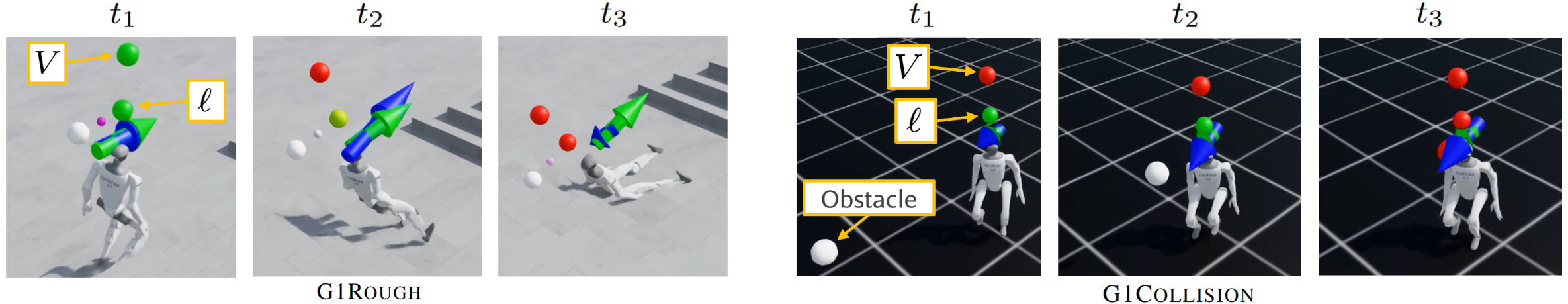}
    \caption{Humanoid safety value tasks. The safety signal $\ell$ is indicated by the ball color, with green indicating safe states ($\ell\le -0.5$), yellow indicating near violations ($\ell\in(-0.5,0]$), and red indicating safety violations ($\ell>0$).
    The predicted safety value $V$ is indicated by the higher ball with the same color coding.
    % Values in the above examples are computed by $\lambda$-Reachability.
    % \textsc{G1Flat} uses the same environment as \textsc{G1Collision}.
    }
    \vspace{-10pt}
    \label{fig:envs}
\end{figure}

\subsection{Ablations and Baselines}\label{sec:safety_value_training_main}

$\lambda$-Reachability is trained by regressing towards the geometric-horizon targets \eqref{eq:nstep_target} by minimizing
% $\cL_\mathrm{main} = \EE_{x_t\sim\tau^\pi, n\sim\mathrm{Geom(1-\lambda)}}\|V^\pi(x_t) - y_t^{(n)}\|^2$
\begin{equation}\label{eq:lambda_loss_overall}
    \cL_\mathrm{main} = \EE_{x_t\sim\tau^\pi, n\sim\mathrm{Geom(1-\lambda)}}\|V^\pi(x_t) - y_t^{(n)}\|^2
\end{equation}
% We minimize \eqref{eq:lambda_loss_overall}
via gradient descent with a learning rate of $\alpha=1e-3$ for $2000$ steps.
$\lambda$ is set to $0.99$.
To examine the effect of $\lambda$ value, we deploy ablations by setting alternative $\lambda$ values in $\{0.95, 0.5, 0.0\}$.
% Auxiliary losses are scaled and added to the main loss during training.
We also compare $\lambda$ to other baselines that recover the safety value \eqref{eq:safety_value}.

\textbf{Discounted Policy Evaluation (DPE)}.
DPE \citep{fisac2019bridging} recovers $V^\pi$ by regressing towards the discounted safety Bellman target \eqref{eq:discounted_safety_bellman}.
The discount factor $\gamma$ is initialized to $0.5$ and annealed to $1.0$ during training.
This serves as the main baseline which $\lambda$-Reachability builds on.

% \textbf{Weakly Supervised (WS)}.
% WS trains on the estimated invariance label $c_t$ only by minimizing the binary cross-entropy loss.

\textbf{Supervised (SV)}.
SV fits the estimated safety value $\bar{V}^{\pi}(x_t)$.
This represents the optimum in highly controlled tasks.
% described in \Cref{sec:exp_task}.
In general, it is difficult to identify ``events'' and segment trajectories into finite episodes that capture infinite-horizon safety.
Labels from partial trajectories would be biased.
Hence, SV is considered a miracle baseline for evaluation only, rather than a practical method.

% \begin{itemize}
%     \item 1-step 
%     \item Weakly supervised
%     \item $\lambda$ with all losses, $\lambda = 0.99$
%     \item $\lambda$ without partial loss
%     \item $\lambda$ with $\lambda = 0.9$
%     \item $\lambda$ with $\lambda = 0.0$ (expected 1-step bootstrap)
%     \item Supervised
% \end{itemize}

\begin{figure*}[t]
    \centering

    % --- Row 1 (currently 2 plots, future-proof to 4) ---
    \begin{subfigure}[t]{0.31\textwidth}
        \centering
        \includegraphics[width=\linewidth]{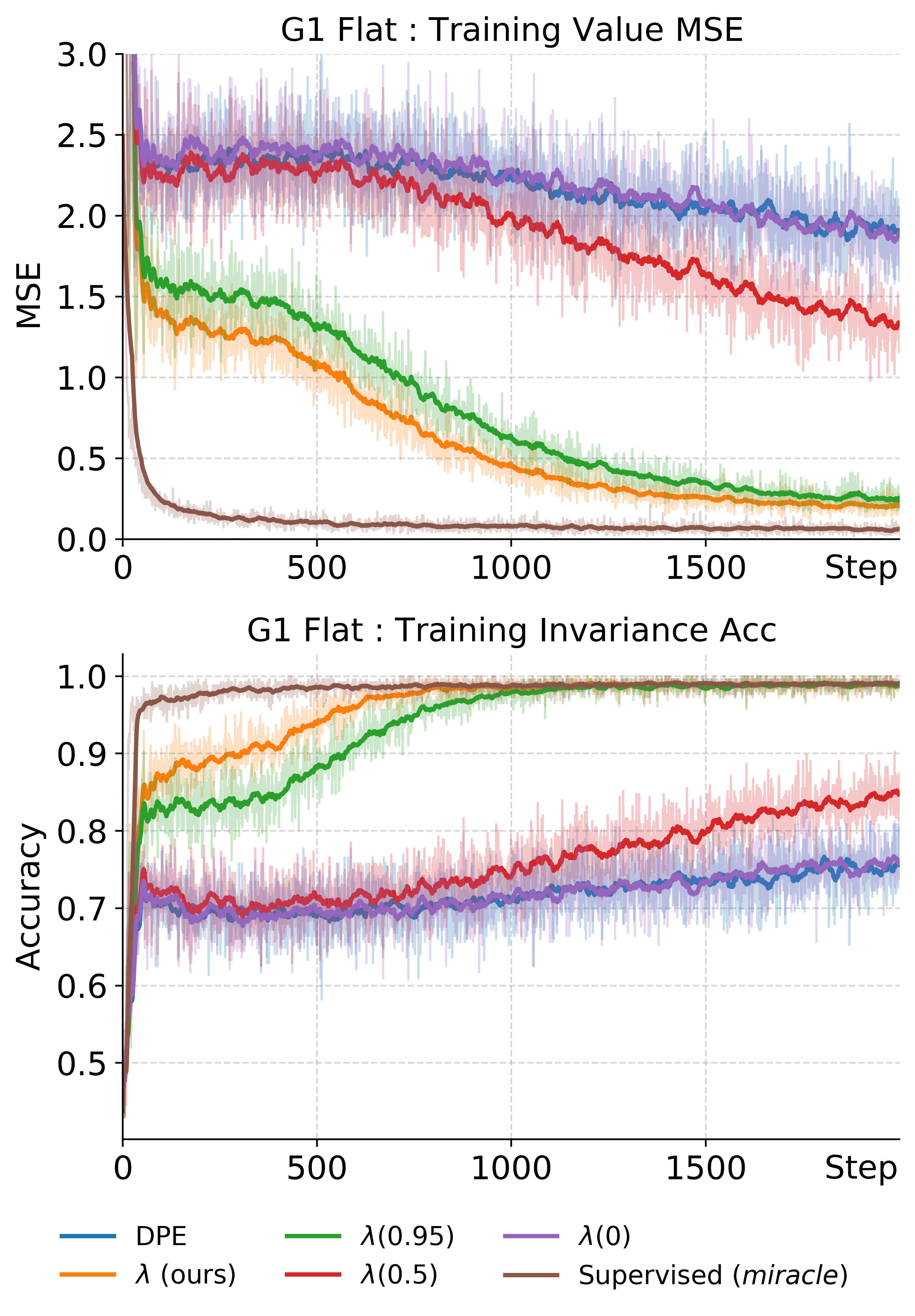}
        \caption{G1 Flat}
        \label{fig:g1_flat_train}
    \end{subfigure}
    \hfill
    \begin{subfigure}[t]{0.31\textwidth}
        \centering
        \includegraphics[width=\linewidth]{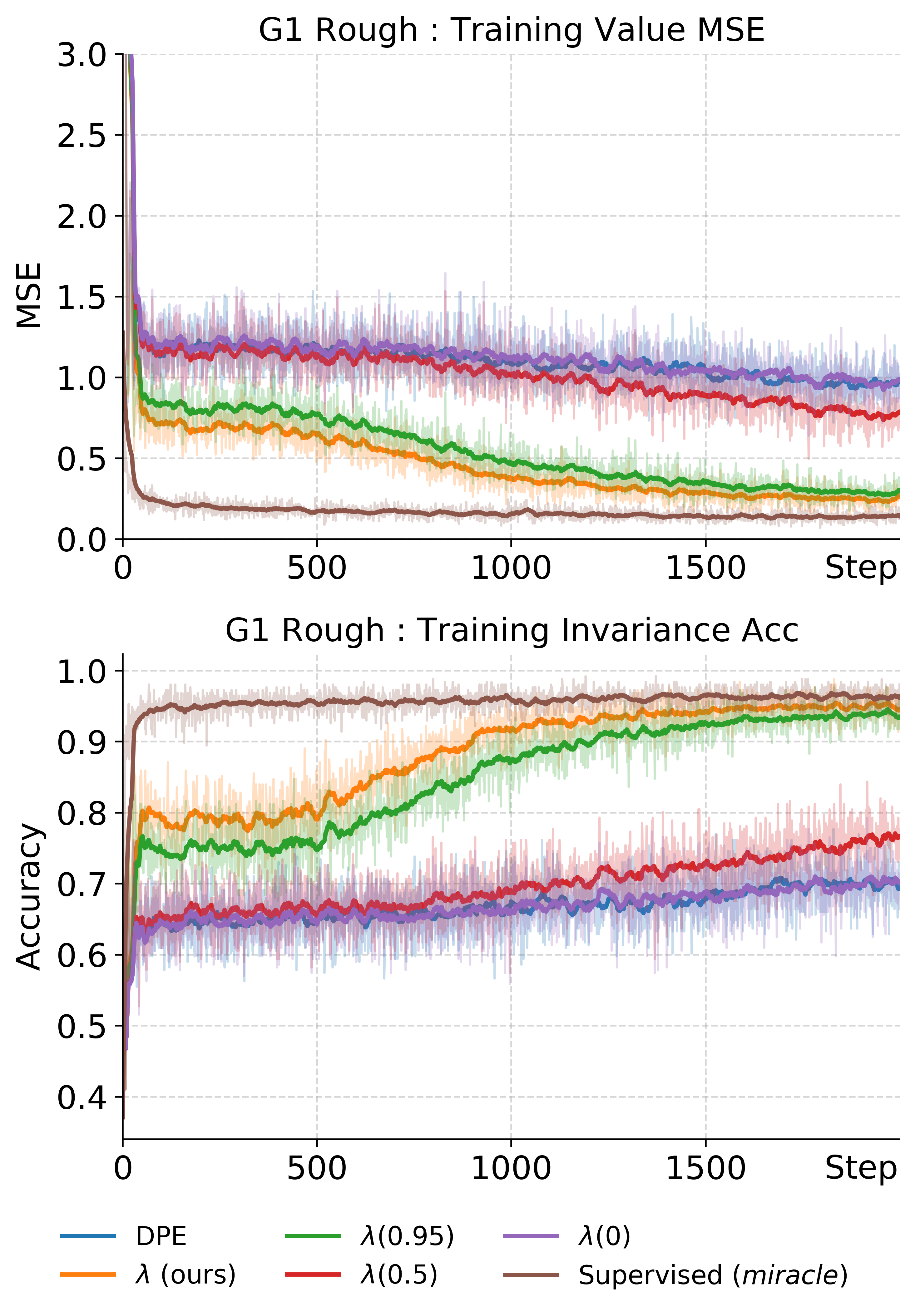}
        \caption{G1 Rough}
        \label{fig:g1_rough_train}
    \end{subfigure}
    \hfill
    \begin{subfigure}[t]{0.31\textwidth}
        \centering
        \includegraphics[width=\linewidth]{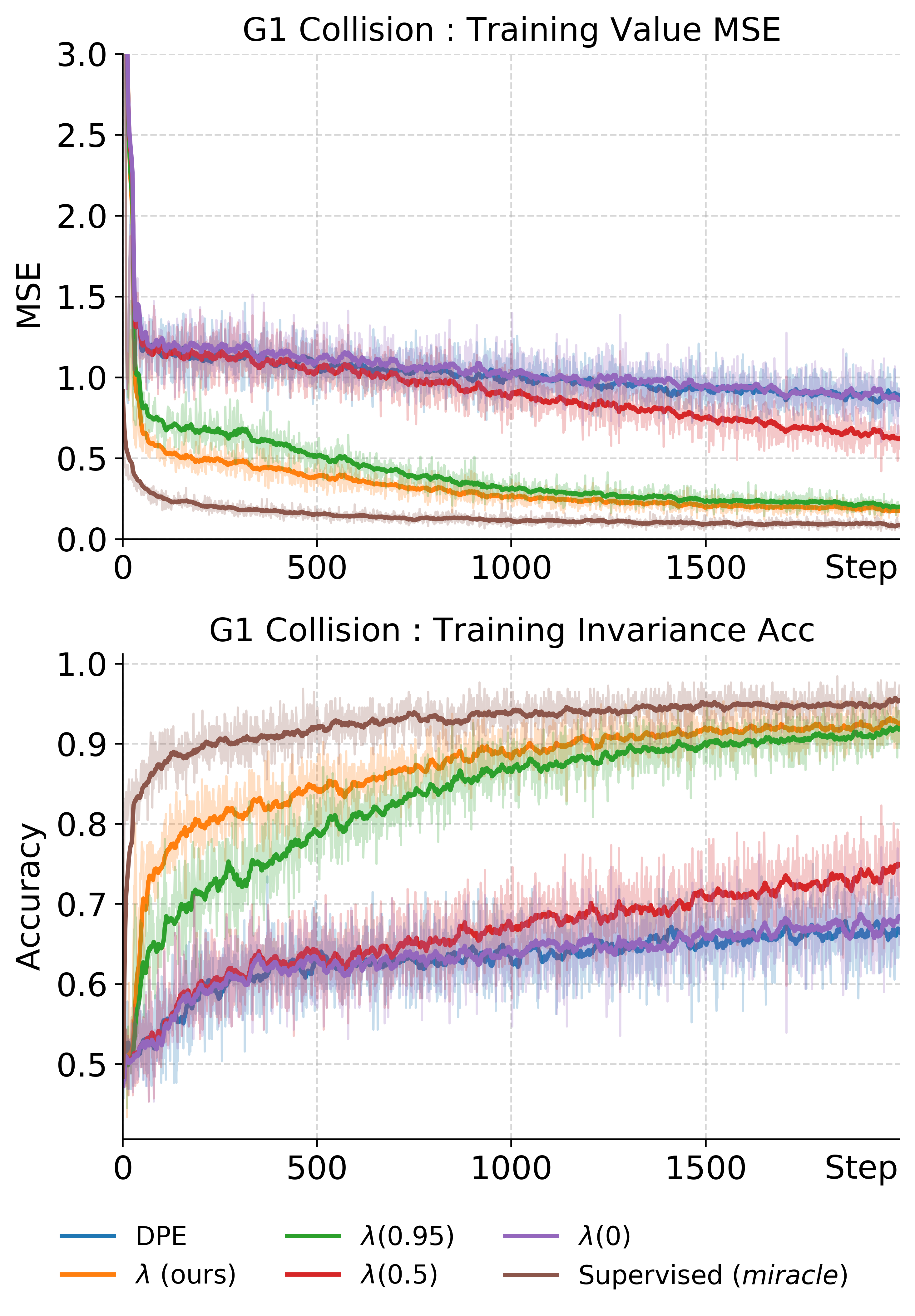}
        \caption{G1 Collision}
    \end{subfigure}
    % \hfill
    % \begin{subfigure}[t]{0.24\textwidth}
    %     \centering
    %     \includegraphics[width=\linewidth]{<fourth_plot>.png}
    %     \caption{<Fourth Caption>}
    % \end{subfigure}

    \caption{
        Training curves for safety value functions in different safety tasks.
        The top row shows the sample value error $e_V$ during training, while the bottom row shows the invariant state classification accuracy. 
    }
    \label{fig:training_curves}
    \vspace{-10pt}
\end{figure*}

\subsection{Metrics}\label{sec:safety_value_inference_metrics}

For each state, the safety value function $V^\pi$ should correctly reveal both (a) whether the system will stay safe and (b) the current safety margin.
When applied as a safety monitor, $V^\pi$ should alarm about future unsafe states as early as possible.
That leads to the following metrics.
% Hence, we evaluate each safety value learning approach with the following metrics.

\textbf{Temporal Recall $r_\mathrm{temp}$} is the ratio of the predictive horizon for unsafe states over the maximal possible horizon: $r_\mathrm{temp} = \EE_{\tau^\pi}[\max(0,t_\mathrm{unsafe}-\hat{t}_\mathrm{unsafe})/(t_\mathrm{unsafe}-t_0)]$
% \begin{equation}
%     r_\mathrm{temp} = \EE_{\tau^\pi}[\max(0,t_\mathrm{unsafe}-\hat{t}_\mathrm{unsafe})/t_\mathrm{unsafe}]
% \end{equation}
where $t_\mathrm{unsafe}$, $\hat{t}_\mathrm{unsafe}$ are the earliest timestep when $\ell$, $V^\pi(x_t) \geq 0$.
% The episode starts at $t=0$.
$r_\mathrm{temp}=1$ indicates the optimal value.

\textbf{Sample Value Error $e_\mathrm{V}$} is the mean squared error $e_\mathrm{V} = \EE_{x_t\sim\tau^\pi}|V^\pi(x_t)-\bar{V}^{\pi}(x_t)|^2$.
% \begin{equation}
%     e_\mathrm{V} = \EE_{x_t\sim\tau^\pi}|V^\pi(x_t)-\bar{V}^{\pi}(x_t)|^2
% \end{equation}

\textbf{Sample False Positive Rate $r_\mathrm{FPR}$} indicates the chances that $V^\pi$ misclassifies a non-invariant state $r_\mathrm{FPR} = p(V^\pi(x_t)\le 0 ~|~ c_t=0)$.
% \begin{equation}
%     r_\mathrm{FPR} = p(V^\pi(x_t)\le 0 ~|~ c_t=0)
% \end{equation}

% Metrics:
% \begin{itemize}
%     \item Temporal Recall
%     \item Invariance Classification Accuracy
%     \item Safety Value Error
% \end{itemize}

\begin{table}[t]
\centering
\caption{Inference results (mean $\pm$ std) in simulation. \textcolor{miracleolive}{Supervised} is marked as the miracle baseline. Best values among non-miracle methods are shown in bold.}
\label{tab:quantitative}
\renewcommand{\arraystretch}{1.15}
\setlength{\tabcolsep}{3.0pt}
\scriptsize
\resizebox{\linewidth}{!}{%
\begin{tabular}{lccc|ccc|ccc}
\toprule
 & \multicolumn{3}{c|}{\textsc{G1Flat}} 
 & \multicolumn{3}{c|}{\textsc{G1Rough}} 
 & \multicolumn{3}{c}{\textsc{G1Collision}} \\
\cmidrule(lr){2-4} \cmidrule(lr){5-7} \cmidrule(lr){8-10}
\textbf{Method} 
& {$r_\mathrm{temp}$ (\%)} 
& {$e_\mathrm{V}$} 
& {$r_\mathrm{FPR}$ (\%)}
& {$r_\mathrm{temp}$ (\%)} 
& {$e_\mathrm{V}$} 
& {$r_\mathrm{FPR}$ (\%)}
& {$r_\mathrm{temp}$ (\%)} 
& {$e_\mathrm{V}$} 
& {$r_\mathrm{FPR}$ (\%)} \\
\midrule

$\lambda$ (ours) 
& \textbf{99.98 $\pm$ 0.21} & \textbf{0.09 $\pm$ 0.19} & \textbf{0.21}
& \textbf{97.92 $\pm$ 9.24} & \textbf{0.08 $\pm$ 0.22} & \textbf{6.49}
& \textbf{99.40 $\pm$ 4.79} & 0.13 $\pm$ 0.23 & \textbf{6.12} \\

$\lambda~(0.95)$
& 99.97 $\pm$ 0.29 & 0.11 $\pm$ 0.23 & 0.38
& 96.43 $\pm$ 11.34 & 0.09 $\pm$ 0.25 & 9.23
& 98.63 $\pm$ 5.96 & \textbf{0.12 $\pm$ 0.20} & 9.15 \\

$\lambda~(0.5)$
& 56.52 $\pm$ 20.39 & 0.71 $\pm$ 1.23 & 31.91
& 52.40 $\pm$ 27.65 & 0.27 $\pm$ 0.64 & 45.45
& 44.80 $\pm$ 18.28 & 0.16 $\pm$ 0.33 & 48.02 \\

$\lambda~(0.0)$
& 26.39 $\pm$ 14.09 & 1.00 $\pm$ 1.65 & 47.42
& 21.97 $\pm$ 19.06 & 0.39 $\pm$ 0.88 & 60.17
& 23.85 $\pm$ 14.53 & 0.18 $\pm$ 0.38 & 59.63 \\

$\mathrm{DPE}$ 
& 22.05 $\pm$ 12.30 & 1.04 $\pm$ 1.70 & 49.31
& 21.94 $\pm$ 16.66 & 0.40 $\pm$ 0.89 & 59.68
& 22.61 $\pm$ 14.50 & 0.18 $\pm$ 0.37 & 60.47 \\

\rowcolor{miracleolive!12}
\miracle{$\mathrm{Supervised}$} 
& \miracle{99.99 $\pm$ 0.33} & \miracle{0.05 $\pm$ 0.13} & \miracle{0.09}
& \miracle{99.35 $\pm$ 6.29} & \miracle{0.09 $\pm$ 0.17} & \miracle{2.71}
& \miracle{99.72 $\pm$ 2.60} & \miracle{0.14 $\pm$ 0.27} & \miracle{2.05} \\

\bottomrule
\end{tabular}
}
\vspace{-10pt}
\end{table}

\subsection{Quantitative Results in Simulation}\label{sec:sim_quantitatives}

% \ruic{see appendix for environment (dof model, obs comp), safety value learning (lr, delta, terminal value, Batch, n\_max, tau, etc.) details}

We summarize the training results in \Cref{fig:training_curves} and present evaluations in \Cref{tab:quantitative}.
See Appendix \ref{append:training_details} for implementation details.
Notably, \textsc{G1Flat} is already a challenging safety value learning problem, with substantially higher state dimensionality than prior benchmarks \citep{chen2023sis,ganai2024hamilton}.
\textsc{G1Rough} further increases the complexity through height-map perception, while \textsc{G1Collision} is challenging in safety reasoning with multiple constraints.
% including balance and dynamic obstacle avoidance.
% Together, these tasks capture representative challenges encountered in realistic environments, making the results a meaningful proxy for evaluating the practical performance of different methods.

\textbf{Baselines}.
From \Cref{fig:training_curves}, we see that $\lambda$-Reachability quickly reduces the safety value prediction error and improves the invariance classification accuracy.
The performance converges to that of the miracle baseline, showing that $\lambda$-Reachability is able to learn near-optimal safety value even with partial trajectories.
On the other hand, discounted policy evaluation (DPE) has trouble in improving both metrics.
Since DPE relies on fixed one-step updates \eqref{eq:discounted_safety_bellman}, the propagation of unsafe events can be slow and cause inefficient learning in high-dimensional tasks.
The above gap is also pronounced in \Cref{tab:quantitative} where $\lambda$-Reachability shows significant advantage over DPE in early warning of impending safety violations, accurate safety margins and rare false positives.
With those, $\lambda$-Reachability validates as an efficient and reliable approach to high-dimensional safety value learning.

\textbf{Ablations}.
From both \Cref{fig:training_curves} and \Cref{tab:quantitative}, we see that the choice of $\lambda$ has a major impact on the performance.
With $\lambda=0.99$, $\lambda$-Reachability achieves near-optimal safety learning across all tasks, matching the performance of the miracle baseline.
As $\lambda$ decreases, the expected bootstrap horizon drops according to \Cref{sec:interpretations}, e.g., $20$ steps at $\lambda=0.95$ and $2$ steps at $\lambda=0.5$.
That quickly degrades the performance until it matches the performance of DPE at $\lambda=0$.
This agrees with the fact that with $\lambda=0$, the horizon sampling \eqref{eq:geometric_dist} becomes deterministic and always samples $1$, effectively reducing the target \eqref{eq:nstep_target} to the one-step baseline \eqref{eq:dpe_target} with $\gamma=1$.
Hence, we confirm that local bootstrapping cannot reliably propagate future safety violations, especially when safety-relevant events occur far from the current state.

\subsection{Hardware Experiments}\label{sec:hardware_quantitatives}

\begin{table}[t]
  \centering
  \caption{Hardware inference results (mean $\pm$ std) on real \textsc{G1Flat} and \textsc{G1Collision} tasks.
  % \textcolor{miracleolive}{Supervised} is marked as the miracle baseline. Best values among non-miracle methods are shown in bold.
  }
  \label{tab:quantitative_hw}
  \renewcommand{\arraystretch}{1.15}
  \setlength{\tabcolsep}{3.0pt}
  \scriptsize
  \resizebox{0.7\linewidth}{!}{%
  \begin{tabular}{lccc|ccc}
  \toprule
   & \multicolumn{3}{c|}{\textsc{G1Flat}}
   & \multicolumn{3}{c}{\textsc{G1Collision}} \\
  \cmidrule(lr){2-4} \cmidrule(lr){5-7}
  \textbf{Method}
  & {$r_\mathrm{temp}$ (\%)}
  & {$e_\mathrm{V}$}
  & {$r_\mathrm{FPR}$ (\%)}
  & {$r_\mathrm{temp}$ (\%)}
  & {$e_\mathrm{V}$}
  & {$r_\mathrm{FPR}$ (\%)} \\
  \midrule

  $\lambda$ (ours)
  & 80.91 $\pm$ 4.96 & \textbf{0.19 $\pm$ 0.15} & 20.31
  & \textbf{73.59 $\pm$ 9.20} & 0.48 $\pm$ 0.25 & \textbf{30.54} \\

  $\lambda~(0.95)$
  & 82.24 $\pm$ 10.15 & 0.21 $\pm$ 0.16 & 20.31
  & 72.09 $\pm$ 9.13 & \textbf{0.47 $\pm$ 0.25} & 31.83 \\

  $\lambda~(0.5)$
  & \textbf{84.26 $\pm$ 11.78} & 0.24 $\pm$ 0.14 & \textbf{18.77}
  & 48.78 $\pm$ 27.09 & 0.58 $\pm$ 0.33 & 48.35 \\

  $\lambda~(0.0)$
  & 74.25 $\pm$ 22.35 & 0.26 $\pm$ 0.19 & 31.80
  & 41.29 $\pm$ 23.66 & 0.66 $\pm$ 0.39 & 53.99 \\

  $\mathrm{DPE}$
  & 62.07 $\pm$ 19.42 & 0.44 $\pm$ 0.32 & 45.21
  & 13.52 $\pm$ 6.97 & 0.75 $\pm$ 0.52 & 72.20 \\

  \rowcolor{miracleolive!12}
  \miracle{$\mathrm{Supervised}$}
  & \miracle{84.42 $\pm$ 13.26} & \miracle{0.27 $\pm$ 0.26} & \miracle{19.92}
  & \miracle{79.63 $\pm$ 9.67} & \miracle{0.48 $\pm$ 0.33} & \miracle{23.21} \\

  \bottomrule
  \end{tabular}
  }
  \vspace{-10pt}
  \end{table}

We perform \textsc{G1Flat} and \textsc{G1Collision} tasks on a real Unitree G1 humanoid.
% Safety values $V^\pi$ are learned for a robot policy $\pi$ in simulation with necessary adaptations for real-world deployment.
% $V^\pi$ is then applied to rollout data collected from physical experiments.
$V^\pi$ is learned in simulation and applied to physical rollout data.
To perturb the safety signal, we push the robot in \textsc{G1Flat} and throw balls at the robot in \textsc{G1Collision} where the robot and ball are tracked using a mocap system.
We summarize results in \Cref{tab:quantitative_hw}.
$\lambda$-Reachability stably outperforms the DPE baseline, and matches the performance of the miracle baseline with proper hyperparameters.
Results on the \textsc{G1Collision} task show trends similar to those in the previous section.
% where a larger $\lambda$ leads to stronger predictive performance for unsafe conditions, more accurate estimation of the safety margin, and fewer misclassifications of unrecoverable states.
For \textsc{G1Flat}, however, $\lambda$-Reachability with high $\lambda$ values shows less advantage, with top scores at $\lambda=0.5$.
One potential cause is that $V^\pi$ in \textsc{G1Flat} relies entirely on noisy, real proprioceptive signals.
$\lambda$ trained in simulation with longer bootstrap horizons (i.e., larger $\lambda$) relies on predictable future trajectories, which are rare in practice.
$\lambda$ with shorter horizons leans more on the near-term signal and is more robust under distribution shift.
In \textsc{G1Collision}, although the same noises exist, the safety values largely depend on the obstacle, whose trajectories are smooth and predictable thanks to the mocap system.
See Appendix \ref{append:training_details} for training details and Appendix \ref{append:additional_hw_results} for additional results.
% \ruic{hw diffs: robot model, obs space, ppo policy asym actor critic, actor not seeing base vel, etc.}

% \ruic{
% \begin{itemize}
%     \item perform g1 flat, g1 collision task in real setup. real g1 dof model. push trigger. mocap, ball.
%     \item repeat previous policy training, data rollout, and training of baseline, ablation in sim.
%     \item take result $V^\pi$ and apply to hardware data
% \end{itemize}
% }

% \ruic{screen shot vs signal/value plot}

% \ruic{quantitatives}
\section{Limitations}
\label{sec:limitation}

We identify two aspects where future work would benefit.
First, safety value learning requires infinite-horizon trajectories to be sampled, since the horizon is essentially unbounded (see \eqref{eq:geometric_dist}).
In practice, however, policy rollouts are finite with truncated horizon sampling (see \eqref{eq:truncated_geometric_dist}).
Hence, practical $\lambda$ learning can be biased.
It is thus beneficial to derive a max-horizon-dependent term as future work to quantify the learning bias with truncated bootstrap horizons.
Another aspect regards the tradeoff between bias and robustness.
As shown previously, noisy real data can introduce distributional shifts that degrade the performance of long-horizon $\lambda$ values.
On the other hand, the solution to \eqref{eq:nstep_bellman} is unbiased only when $\lambda$ anneals to $1$.
Consequently, $\lambda$ values closer to $1$ reduce bias while making the learned value less robust against noisy online input.
A principled approach to the bias-robustness tradeoff would be beneficial for applying $\lambda$ in general conditions.

\section{Discussions and Conclusions}
\label{sec:conclusion}

In this paper, we proposed $\lambda$-Reachability, a safety value learning approach that solves the safety Bellman equation using a geometric mixture of multi-step max backups.
$\lambda$-Reachability induces both a contraction mapping and unbiased safety value approximation under mild conditions, enabling scalable and efficient temporal-difference learning approaches.
Experiments across multiple value learning problems demonstrate that $\lambda$-Reachability consistently outperforms the baseline, addressing the issue of slow information propagation inherent to one-step bootstrapping in high-dimensional settings.
Although this work focuses on policy evaluation and does not derive a control policy, it can be directly integrated into existing reinforcement learning frameworks to enable safety-aware learning.
For instance, $\lambda$-Reachability can replace value learning in actor--critic methods (e.g., TD3, SAC), yielding actor updates that explicitly maximize safety.
This improves upon prior actor--critic-based safety approaches such as \citep{pandya2025slide}, in which value functions need only be correct up to a partial order.
Crucially, safety-maximizing actor-critics rely on accurate value scales to determine the feasibility of actions and will therefore benefit from the accurate safety margins of $\lambda$.
Similarly, $\lambda$ can be used in policy optimization (e.g., TRPO, PPO) by replacing value targets with geometric-horizon max targets~\eqref{eq:nstep_target}, similar to~\citep{fisac2019bridging}.
Hence, while $\lambda$-Reachability focuses on policy evaluation, it provides a principled approach to learning high-dimensional safety values, which is beneficial across a broad class of safe policy learning problems.

%===============================================================================

\clearpage
% The acknowledgments are automatically included only in the final and preprint versions of the paper.
% \acknowledgments{If a paper is accepted, the final camera-ready version will (and probably should) include acknowledgments. All acknowledgments go at the end of the paper, including thanks to reviewers who gave useful comments, to colleagues who contributed to the ideas, and to funding agencies and corporate sponsors that provided financial support.}

%===============================================================================

% no \bibliographystyle is required, since the corl style is automatically used.
\bibliography{references}  % .bib

\newpage
\appendix

\section{Proof of Contraction Mapping}\label{append:proof_contraction}

\begin{theorem}[Contraction Mapping]\label{thm:contraction_appendix}
    The stochastic multi-step safety Bellman equation induces a contraction mapping under the supremum norm for $\lambda\in[0,1)$ and $\delta\in[0,1)$. Namely, for $V,\tilde{V}:\cX\to\RR$, there exists a constant $\rho\in[0,1)$ such that $\|T_{\lambda,\delta}[V]-T_{\lambda,\delta}[\tilde{V}]\|_\infty \le \rho\|V-\tilde{V}\|_\infty$.
\end{theorem}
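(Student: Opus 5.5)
Fix $x_t$ and condition on the stopping step $n$ and the survival indicator $s$. The trajectory is deterministic given $x_t$ and $\pi$, so the safety signals $\ell_t,\dots,\ell_{t+n-1}$ and the state $x_{t+n}$ are fixed once $n$ is fixed. Write $M_n \defeq \max\{\ell_t,\dots,\ell_{t+n-1}\}$. The realized target under $V$ is then $\max\{M_n, sV(x_{t+n})+(1-s)v_\mathrm{term}\}$, and analogously under $\tilde V$. The plan is to bound the difference of the realized targets pointwise in $(n,s)$ and then take the expectation.

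\textbf{Step 1 (non-expansiveness of max).} For reals $a,b,c$ we have $|\max\{a,b\}-\max\{a,c\}|\le|b-c|$. Apply this with $a=M_n$ and $b,c$ the two bootstrapped values. When $s=0$, both bootstrapped values equal $v_\mathrm{term}$, so the realized targets coincide and their difference is $0$. When $s=1$, the difference is at most $|V(x_{t+n})-\tilde V(x_{t+n})|\le\|V-\tilde V\|_\infty$. Together the two cases give
\begin{equation*}
\big|\max\{M_n,b_t^{(n)}\}-\max\{M_n,\tilde b_t^{(n)}\}\big|\le s\,\|V-\tilde V\|_\infty .
\end{equation*}

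\textbf{Step 2 (expectation).} By Jensen's inequality, i.e.\ $|\EE X|\le\EE|X|$, the difference $|T_{\lambda,\delta}[V](x_t)-T_{\lambda,\delta}[\tilde V](x_t)|$ is at most $\EE[s]\,\|V-\tilde V\|_\infty$. Since $s\mid n\sim\mathrm{Bernoulli}(\delta^n)$ and $n\sim\mathrm{Geom}(1-\lambda)$,
\begin{equation*}
\EE[s]=\sum_{k\ge1}(1-\lambda)\lambda^{k-1}\delta^k=\frac{(1-\lambda)\delta}{1-\lambda\delta}\defeq\rho .
\end{equation*}
This bound holds for every $x_t$, so taking the supremum over $x_t$ gives the claimed inequality.

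\textbf{Step 3 (checking $\rho<1$).} For $\delta\in[0,1)$ and $\lambda\in[0,1)$ we have $(1-\lambda)\delta<1-\lambda\le1-\lambda\delta$. Hence $\rho\in[0,1)$, independent of $x$.

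The main subtlety is measurability and integrability. We need the expectation in \eqref{eq:lambd_operator} to be finite, so that $T_{\lambda,\delta}$ maps bounded functions to bounded functions. I would assume $\ell$ is bounded, so that $M_n\le\sup\ell$, and restrict to bounded $V$; then the geometric tail makes every expectation finite. The role of $v_\mathrm{term}$ as a lower bound of $\ell$ plays no part in the contraction argument: it only needs to be a constant shared by $V$ and $\tilde V$, which is exactly why the $s=0$ branch contributes nothing. If $\pi$ or the dynamics were stochastic, the same argument goes through by additionally conditioning on the trajectory.
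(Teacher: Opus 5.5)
Your proof is correct and follows the paper's argument. Like the paper, you use non-expansiveness of $\max$ in its second argument, observe that the absorbed ($s=0$) branch cancels, and obtain $\rho=\EE[\delta^n]$; the paper simply averages out $s$ before taking differences, whereas you condition on $(n,s)$ jointly. Your closed form $\rho=(1-\lambda)\delta/(1-\lambda\delta)$, with the explicit check that it does not depend on $x_t$ and is below $1$, spells out a step the paper leaves implicit.
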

\begin{proof}
    % Define the operator \changliu{this operator should be defined outside the proof and better to be before the Theorem is introduced.}
    For brevity, let $m_t^{(n)}\defeq\max_{t\le k \le t+n-1}\ell_k$.
    Let $V_t\equiv V(x_t)$ and expand the terminal sampling, we have
    \begin{align}
        T_{\lambda,\delta}[V](x_t) = \EE_{n|x_t}\big[&\delta^n\max(m_t^{(n)}, V_{t+n}) \nonumber\\
        & + (1-\delta^n)\max(m_t^{(n)},v_\mathrm{term})\big].
    \end{align}
    Then, $\forall x_t\in\cX$,
    \begin{align}
        & \left|T_{\lambda,\delta}[V](x_t)-T_{\lambda,\delta}[\tilde{V}](x_t)\right| \nonumber \\
        = & \left|\EE_{n|x_t}\Big[ \delta^n(\max(m_t^{(n)}, V_{t+n})-\max(m_t^{(n)}, \tilde{V}_{t+n})) \Big]\right| \nonumber \\
        \le & \EE_{n|x_t}\Big[ \delta^n \left| \max(m_t^{(n)}, V_{t+n})-\max(m_t^{(n)}, \tilde{V}_{t+n}) \right| \Big] \nonumber \\
        \le & \EE_{n|x_t}\Big[ \delta^n \left| V_{t+n} - \tilde{V}_{t+n} \right| \Big] \nonumber \\
        \le & \EE_{n|x_t}[ \delta^n ] \sup_{x} \left| V(x) - \tilde{V}(x) \right| \nonumber \\
        = & \EE_{n|x_t}[ \delta^n ] \| V - \tilde{V} \|_\infty. \nonumber
    \end{align}
    Hence, we have $\|T_{\lambda,\delta}[V]-T_{\lambda,\delta}[\tilde{V}]\|_\infty \le \EE_{n|x_t}[ \delta^n ] \| V - \tilde{V} \|_\infty$.
    The contraction factor is indeed $\rho=\EE_{n|x_t}[ \delta^n ]\in[0,1)$.
\end{proof}

\newpage
\section{\texorpdfstring{$\lambda$}{lambda}-Reachability Implementation}\label{append:lambda_training}

\subsection{Practical Implementations}

\textbf{Truncated Geometric Distribution.}
In practice, trajectories are finite and only a limited number of future states are available for each anchor time step.
Sampling from an untruncated geometric distribution would therefore place non-negligible probability mass on horizons beyond the available data, implicitly biasing updates toward terminal states.
To address this issue, we sample the bootstrap horizon from a truncated geometric distribution with finite support and renormalized probabilities,
\begin{equation}\label{eq:truncated_geometric_dist}
\mathbb{P}(n = k)
=
\frac{(1-\lambda)\lambda^{k-1}}{1-\lambda^{T-t}},
\quad
k \in \{1, \ldots, T-t\},
\end{equation}
where $T$ denotes the terminal time of the trajectory.
This truncation preserves the geometric structure of the stopping distribution while ensuring that all sampled horizons are feasible given the available data.
% \changliu{Need to discuss what type of bias this will introduce}

\textbf{Value Initialization.}
Safety signals $\ell(x)$ are non-positive within the safe set, and the reachability update relies on a max operator over future safety values.
As a consequence, initializing the value function with overly pessimistic (positive)
% \changliu{this should be pessimistic?}
values can lead to self-reinforcing errors, whereby spurious unsafe predictions are repeatedly propagated backward through bootstrapping.
To prevent this failure mode, we initialize the value function conservatively as
\begin{equation}
V^\pi_0(x) \le \min_x \ell(x),
\end{equation}
% \changliu{Is this being initialized with a single small number?}
which guarantees that early bootstrapped targets are dominated by observed safety signals rather than inaccurate value estimates.
This initialization bias is gradually corrected as unsafe events are encountered during training.
In practice, we assign $V^\pi_0(x)$ to a constant which is by design a lower bound of $\ell$.

% \ruic{according to results so far, no auxiliary loss here makes a difference. consider removing loss ablation;}

% \ruic{changing lambda value changes the performance a lot, as it alters the expected horizon; will keep as part of ablation}

% \ruic{slow updating of target network affects training stability, but this may not be worth showing as it's standard RL procedure.}

% \ruic{the effect of double Q is TBD}

\textbf{Stabilizing Training.}
We adopt several stabilization techniques from value-based learning to improve robustness under bootstrapped max backups.
First, we use a slowly-updated target network to compute bootstrap values in \eqref{eq:nstep_target}.
Concretely, we maintain two online critics $V^\pi_{\theta_1}, V^\pi_{\theta_2}$ and their corresponding target networks $V^-_{\bar\theta_1}, V^-_{\bar\theta_2}$.
Following the clipped target idea in double-Q methods \citep{fujimoto2018addressing}, 
% \changliu{cite}
we compute the bootstrap term conservatively as
\begin{equation}\label{eq:clipped_bootstrap}
V^-_{\mathrm{boot}}(x) \defeq \min\{V^-_{\bar\theta_1}(x),\,V^-_{\bar\theta_2}(x)\},
\end{equation}
and use $V^-_{\mathrm{boot}}(x_{t+n})$ in place of $V_\pi(x_{t+n})$ in \eqref{eq:nstep_target}.
Both online critics regress to the same target, which substantially reduces the propagation of spurious over-estimation through the max operator.
Target networks are updated via Polyak averaging, $\bar\theta \leftarrow (1-\tau)\bar\theta + \tau\theta$.
% Appendix \ref{append:lambda_training} summarizes the full $\lambda$-Reachability training procedure.
% Full training details are reported in Appendix \ref{append:training_details}.

\newpage
\subsection{\texorpdfstring{$\lambda$}{lambda}-Reachability Training Procedures}
\begin{algorithm}[ht]
\caption{$\lambda$-Reachability Training}
\label{alg:lambda_reach}
\begin{algorithmic}[1]
\Require Dataset of trajectories $\{\tau\}$ collected under policy $\pi$
\Require Geometric parameter $\lambda\in(0,1)$; terminal condition parameter $\delta\in(0,1)$; absorption safety value $v_\mathrm{term}\le\min_x\ell(x)$; batch size $B$; max horizon $n_{\max}$; learning rate $\alpha$; Polyak rate $\tau$
\State \textbf{Initialize} two online critics $V^\pi_{\theta_1}, V^\pi_{\theta_2}$ conservatively (e.g., $V^\pi_{\theta_j}(x)\le \min_x \ell(x)$)
\State \textbf{Initialize} target critics $V^-_{\bar\theta_1}\gets V^\pi_{\theta_1}$, $V^-_{\bar\theta_2}\gets V^\pi_{\theta_2}$
\For{each training iteration}
    \State Sample a minibatch of anchor indices $\{(\tau_i,t_i)\}_{i=1}^B$ with futures available up to $n_{\max}$
    \For{each sample $i=1,\ldots,B$}
        \State Let $x_i \gets x^{\tau_i}_{t_i}$, $x_i^+ \gets x^{\tau_i}_{t_i+1}$, and $\ell_i \gets \ell(x_i)$
        \State Let $\bar n_i \gets \min(T_i-t_i,\,n_{\max})$
        \State Sample horizon $n_i \sim \mathrm{TruncGeom}(1-\lambda;\,1,\bar n_i)$, i.e.,
        \[
        \mathbb{P}(n_i=k)=\frac{(1-\lambda)\lambda^{k-1}}{1-\lambda^{\bar n_i}},
        \quad k\in\{1,\ldots,\bar n_i\}
        \]
        \State Compute conservative bootstrap
        \[
        V_i \gets \min\!\left\{V^-_{\bar\theta_1}(x^{\tau_i}_{t_i+n_i}),\,V^-_{\bar\theta_2}(x^{\tau_i}_{t_i+n_i})\right\}
        \]
        \State Sample terminal condition
        \[
            s_i \sim \mathrm{Bernoulli}(\delta^{n_i})
        \]
        \State Compute max target
        \[
            y_i \gets \max\{\ell(x^{\tau_i}_{t_i}),\ldots,\ell(x^{\tau_i}_{t_i+n_i-1}),\, s_i V_i + (1-s_i) v_\mathrm{term}\}
        \]
        \State Evaluate online critics:
        \[
        v_{j,i} \gets V^\pi_{\theta_j}(x_i),
        \quad j\in\{1,2\}
        \]
    \EndFor
    \State Compute loss (mean over batch):
    \[
    \mathcal{L} \gets \frac{1}{B}\sum_{i=1}^B\sum_{j=1}^2 (v_{j,i}-y_i)^2
    \]
    \State Gradient step:
    \[
    \theta_j \gets \theta_j - \alpha \nabla_{\theta_j}\mathcal{L}, \quad j\in\{1,2\}
    \]
    \State Update target critics (Polyak averaging):
    \[
    \bar\theta_j \gets (1-\tau)\bar\theta_j + \tau\theta_j, \quad j\in\{1,2\}
    \]
\EndFor
\end{algorithmic}
\end{algorithm}

\newpage
\section{Implementation Details}\label{append:training_details}

We present the implementation details of the entire pipeline used to train and evaluate safety value functions for the \textsc{G1Flat}, \textsc{G1Rough}, and \textsc{G1Collision} tasks both in simulation and on physical hardware.
For each task, we keep the humanoid modeling consistent across policy training, rollout collection, safety value learning, and safety value inference.
For simulation inference, all stages are performed in simulation.
For hardware inference, all stages except the safety value inference are done in simulation.
Adaptations are made to accommodate the absence of certain hardware observations, for example, link-obstacle contact signals. 

All training and inference steps are performed on a single workstation
equipped with an NVIDIA RTX 4090 (24\,GB) GPU, running PyTorch
2.7.0+cu128, Python 3.11, IsaacLab 0.46, and \texttt{rsl\_rl}.

\subsection{Robot Policy Training}

We first describe the environment setup and training details for acquiring the robot policy $\pi$.

\subsubsection{Policy Training for Safety Value Inference in Simulation}
For simulation experiments in \Cref{sec:sim_quantitatives}, we use a Unitree G1 humanoid robot with 34 DoF.
The observation spaces for all three tasks are listed in \Cref{tab:sim_obs}.

% \ruic{insert table showing the observation state composition of g1flat, g1rought, and g1collision; each task occupies a column, and each row is a segment of the observation (show short description of what it is, no need to add symbols; the final row shows the dimension of each row; the bottom row should show the total dims; use a row even if this segment only exists for a certain task, tasks missing that segment of observation renders empty}

\begin{table}[h]
\centering
\small
\caption{Observation composition for the simulation policies. All entries are
single-frame dimensions (no history stacking). The critic shares the actor's
observation group (symmetric actor--critic).}
\label{tab:sim_obs}
\begin{tabular}{lccc}
\toprule
Observation segment & \textsc{G1Flat} & \textsc{G1Rough} & \textsc{G1Collision} \\
\midrule
Base linear velocity $\mathbf{v}\in\mathbb{R}^3$            & 3 & 3 & 3 \\
Base angular velocity $\boldsymbol{\omega}\in\mathbb{R}^3$  & 3 & 3 & 3 \\
Projected gravity $\mathbf{g}\in\mathbb{R}^3$               & 3 & 3 & 3 \\
Velocity command $\mathbf{c}\in\mathbb{R}^3$                & 3 & 3 & 3 \\
Joint positions $\mathbf{q}\in\mathbb{R}^{37}$              & 37 & 37 & 37 \\
Joint velocities $\dot{\mathbf{q}}\in\mathbb{R}^{37}$       & 37 & 37 & 37 \\
Last action $\mathbf{a}\in\mathbb{R}^{37}$                  & 37 & 37 & 37 \\
Height scan $\mathbf{s}_{\rm scan}\in\mathbb{R}^{187}$ ($17{\times}11$ grid, $0.1$\,m) & --- & 187 & --- \\
Ball position in base frame $\mathbf{p}_{\rm ball}\in\mathbb{R}^3$ & --- & --- & 3 \\
Ball velocity in base frame $\mathbf{v}_{\rm ball}\in\mathbb{R}^3$ & --- & --- & 3 \\
\midrule
Total observation dimension & 123 & 310 & 129 \\
\bottomrule
\end{tabular}
\end{table}

In \textsc{G1Flat} and \textsc{G1Rough}, the safety signal is defined as
\begin{equation}
    \ell_\mathrm{flat}\defeq \max\left\{\ell_\mathrm{root}, \ell_\mathrm{ori}\right\},~\textrm{where}~\ell_\mathrm{root}=-\frac{h_\mathrm{root}-h_\mathrm{min}}{h^*_\mathrm{root}-h_\mathrm{min}},~\ell_\mathrm{ori} = \frac{\psi_g-\psi_\mathrm{max}}{\psi_\mathrm{max}}.
\end{equation}
The robot is considered to be safe and balanced if (a) its root height $h_\mathrm{root}$ is higher than the threshold $h_\mathrm{min}=0.2m$ and (b) the deviation $\psi_g$ between the root Z axis and gravity is at most $\psi_\mathrm{max}=\pi/4$.
$h^*_\mathrm{root}=0.65m$ is the default standing root height.

In \textsc{G1Collision}, a lightweight ball ($50g$, $10cm$ radius) is spawned every $3$ to $5$ seconds and launched toward the humanoid robot.
The ball spawns at a random position at $[2m,5m]$ from the robot, at heights between $[0.1m,1.2m]$, and is given an initial velocity of $[3m/s,6m/s]$ aimed at the robot's torso at $0.8m$ height with gravity compensation to follow a ballistic arc.

Rewards used for training $\pi$ are summarized in \Cref{tab:sim_rewards}.

% \ruic{same format as above. show each reward component (show formula and follow paper convention) for each task, and have a weight column reporting the weights of those components}

% Replaces: \ruic{same format as above. show each reward component (show formula
%   and follow paper convention) for each task, and have a weight column ...}
% Location: Sec. 3.1, immediately after "Rewards used for training $\pi$ are
%   summarized below."
% Requires: \usepackage{makecell} (and booktabs, which the prior version also uses).
\begin{table}[ht]
\centering
\small
\setlength{\tabcolsep}{4pt}
\renewcommand{\arraystretch}{1.15}
\caption{Reward composition for the simulation PPO policies. A dash (---)
means the term is not used for that task. For \textsc{G1Collision} the
ball-related terms are curriculum-gated, becoming active only after iteration
$500$; episodes terminate at $20$\,s or on illegal \texttt{torso\_link}
contact (${>}1$\,N), with ball impacts above $0.4$\,m excluded from
termination.}
\label{tab:sim_rewards}
\begin{tabular}{@{}l l c c c@{}}
\toprule
Term & Formula & \textsc{G1Flat} & \textsc{G1Rough} & \textsc{G1Collision} \\
\midrule
Lin.\ vel.\ tracking
  & \makecell[l]{$\exp(-\lVert\mathbf{v}_{xy}-\mathbf{c}_{xy}\rVert^{2}/\sigma^{2})$\\ $\sigma{=}0.5$}
  & $1.0$ & $1.0$ & $1.0$ \\
Ang.\ vel.\ tracking
  & \makecell[l]{$\exp(-(\omega_{z}-c_{\omega_{z}})^{2}/\sigma^{2})$\\ $\sigma{=}0.5$}
  & $1.0$ & $2.0$ & $1.0$ \\
$z$-velocity penalty & $v_{z}^{2}$ & $-0.2$ & $0$ & $-0.2$ \\
Roll/pitch penalty & $\lVert\boldsymbol{\omega}_{xy}\rVert^{2}$ & $-0.05$ & --- & $-0.05$ \\
Joint torques & $\lVert\boldsymbol{\tau}\rVert^{2}$
  & $-2{\times}10^{-6}$ & $-1.5{\times}10^{-7}$ & $-2{\times}10^{-6}$ \\
Joint accelerations & $\lVert\ddot{\mathbf{q}}\rVert^{2}$
  & $-1{\times}10^{-7}$ & $-1.25{\times}10^{-7}$ & $-1{\times}10^{-7}$ \\
Action rate & $\lVert\mathbf{a}_{t}-\mathbf{a}_{t-1}\rVert^{2}$
  & $-0.005$ & $-0.005$ & $-0.005$ \\
Feet air-time bonus
  & \makecell[l]{$\sum_{f}(t_{f}^{\rm air}-\bar{t})\,\mathbf{1}[\text{first contact}]$\\ $\bar{t}{=}0.4$\,s}
  & $0.75$ & $0.25$ & $0.75$ \\
Feet slide
  & $\sum_{f}\lVert\mathbf{v}_{f}\rVert\,\mathbf{1}[\text{contact}]$
  & $-0.1$ & $-0.1$ & $-0.1$ \\
Flat orientation & $\lVert\mathbf{g}_{xy}\rVert^{2}$ & $-1.0$ & $-1.0$ & $-1.0$ \\
Joint-position limits
  & \makecell[l]{$\sum_{j}\max(0,q_{j}-q_{j}^{\max})$\\ $\;+\max(0,q_{j}^{\min}-q_{j})$}
  & $-1.0$ & $-1.0$ & $-1.0$ \\
Undesired contacts
  & \makecell[l]{$\sum_{b}\mathbf{1}[\lVert\mathbf{f}_{b}\rVert>1\,\text{N}]$\\ (torso/arms)}
  & $-1.0$ & --- & $-1.0$ \\
Termination & $\mathbf{1}[\text{episode terminates}]$ & $-200$ & $-200$ & $-200$ \\
Joint deviation (hip)
  & $\sum_{j\in\text{hip}}(q_{j}-q_{j}^{\rm def})^{2}$
  & --- & $-0.1$ & --- \\
Joint deviation (arms)
  & $\sum_{j\in\text{arms}}(q_{j}-q_{j}^{\rm def})^{2}$
  & --- & $-0.1$ & --- \\
Joint deviation (fingers)
  & $\sum_{j\in\text{fingers}}(q_{j}-q_{j}^{\rm def})^{2}$
  & --- & $-0.05$ & --- \\
Joint deviation (torso)
  & $(q_{\rm torso}-q_{\rm torso}^{\rm def})^{2}$
  & --- & $-0.1$ & --- \\
Ball proximity
  & $\exp(-d_{\rm ball}^{2}/\sigma^{2})$, $\sigma{=}2.0$\,m
  & --- & --- & $-5.0$ \\
Ball collision
  & $\mathbf{1}[\lVert\mathbf{f}_{\rm ball}\rVert>0.1\,\text{N}]$
  & --- & --- & $-1.0$ \\
\bottomrule
\end{tabular}
\end{table}

The policy $\pi$ for all tasks is an MLP with
% \ruic{complete robot policy configuration details, size, num of layers, output dim, etc.}.
feed-forward layers with ELU activations and a Gaussian action head
(initial scalar standard deviation $1.0$, no observation normalization).
The hidden sizes are $[256,128,128]$ for \textsc{G1Flat} and
\textsc{G1Collision} and $[512,256,128]$ for \textsc{G1Rough}; the output
dimension matches the action space ($37$), and joint-position targets are
recovered as $\mathbf{q}^{\rm target}=\mathbf{q}^{\rm def}+0.5\,\mathbf{a}$
(scale $0.5$, default-offset on, no action clipping).
The policy is trained with PPO for $1000$ steps with $2048$ parallel agents in IsaacLab.
Hyperparameter used are listed in \Cref{tab:sim_ppo}.

% \ruic{small table summarizing all robot policy training hyperparams. mention actor critic obs are symmetric for simulation inference.}

\begin{table}[h]
\centering
\small
\caption{PPO hyperparameters for the simulation policies. All three tasks use
the same algorithmic settings; only the network width and the number of
training iterations vary per task. For all simulation tasks the actor and the
critic consume the \emph{same} observation group (symmetric actor--critic).}
\label{tab:sim_ppo}
\begin{tabular}{ll}
\toprule
Optimiser & Adam, lr $1{\times}10^{-3}$, adaptive schedule (target KL $0.01$) \\
Discount $\gamma$, GAE $\lambda_{\rm GAE}$ & $0.99$,\; $0.95$ \\
PPO clip / value-loss coef / entropy coef & $0.2$ / $1.0$ / $0.008$ \\
Epochs per update / minibatches per update & $5$ / $4$ \\
Steps per environment per rollout & $24$ \\
Parallel environments & $4096$ \\
Max gradient norm & $1.0$ \\
Iterations (\textsc{G1Flat} / \textsc{G1Rough} / \textsc{G1Collision}) & $1500$ / $3000$ / $1500$ \\
Random seed & $42$ \\
Actor / critic observations & symmetric (no privileged terms) \\
\bottomrule
\end{tabular}
\end{table}

\subsubsection{Policy Training for Safety Value Inference in the Real World}

For hardware experiments in \Cref{sec:hardware_quantitatives}, we use a Unitree G1 humanoid robot with 29 DoF to be consistent with the physical platform we have access to.
The environment setup for \textsc{G1Flat} and \textsc{G1Collision} is shown below.

% \ruic{follow above section and present 29 dof observation space, rewards, policy training hyperparameters}

% Replaces: \ruic{follow above section and present 29 dof observation space,
%   rewards, policy training hyperparameters}
% Location: Sec. 3.2 (Policy Training for Safety Value Inference in the Real
%   World), immediately after "The environment setup for \textsc{G1Flat} and
%   \textsc{G1Collision} is shown below."
% Note: Only \textsc{G1Flat} and \textsc{G1Collision} are deployed on hardware;
%   no 29-DoF \textsc{G1Rough} variant exists.
% Requires: \usepackage{makecell} (and booktabs).

\paragraph{Observation space.}
For deployment the observation history is stacked over the last $5$ control
steps (history length $5$, control rate $50$\,Hz). Table~\ref{tab:hw_obs}
lists the per-frame composition and the resulting actor/critic dimensions.
The actor uses only sensors that are also available on the physical platform
(no base linear velocity), while the critic additionally observes
$\mathbf{v}\in\mathbb{R}^{3}$ per frame: an \emph{asymmetric}
actor--critic.

\begin{table}[h]
\centering
\small
\caption{Observation composition for the $29$-DoF hardware policies. The
critic additionally consumes the base linear velocity $\mathbf{v}$ per frame
(privileged) and is therefore $3$ dimensions per frame larger than the
actor.}
\label{tab:hw_obs}
\begin{tabular}{lcc}
\toprule
Observation segment (per frame) & \textsc{G1Flat} & \textsc{G1Collision} \\
\midrule
Base angular velocity $\boldsymbol{\omega}\in\mathbb{R}^{3}$  & 3 & 3 \\
Projected gravity $\mathbf{g}\in\mathbb{R}^{3}$               & 3 & 3 \\
Velocity command $\mathbf{c}\in\mathbb{R}^{3}$                & 3 & 3 \\
Joint positions $\mathbf{q}\in\mathbb{R}^{29}$                & 29 & 29 \\
Joint velocities $\dot{\mathbf{q}}\in\mathbb{R}^{29}$         & 29 & 29 \\
Last action $\mathbf{a}\in\mathbb{R}^{29}$                    & 29 & 29 \\
Ball position in base frame $\mathbf{p}_{\rm ball}\in\mathbb{R}^{3}$ & --- & 3 \\
Ball velocity in base frame $\mathbf{v}_{\rm ball}\in\mathbb{R}^{3}$ & --- & 3 \\
\midrule
Per-frame dimension                                  & 96  & 102 \\
Actor observation ($5$-frame stack)                  & 480 & 510 \\
Critic observation (actor + $\mathbf{v}$/frame)      & 495 & 525 \\
\bottomrule
\end{tabular}
\end{table}

\paragraph{Rewards.}
Table~\ref{tab:hw_rewards} lists the reward terms used for training the two
hardware policies. \textsc{G1Collision} (HW) is warm-started from the
deployed \textsc{G1Flat} (HW) baseline; the only additions in the avoidance
policy are the three ball-related terms.

\begin{table}[h]
\centering
\small
\setlength{\tabcolsep}{4pt}
\renewcommand{\arraystretch}{1.15}
\caption{Reward composition for the $29$-DoF hardware policies. A dash (---)
denotes a term not used for that task. Termination conditions: time-out
($20$\,s), base height $<0.2$\,m, or projected-gravity tilt $>0.8$\,rad.}
\label{tab:hw_rewards}
\begin{tabular}{@{}l l c c@{}}
\toprule
Term & Formula & \textsc{G1Flat} (HW) & \textsc{G1Collision} (HW) \\
\midrule
Lin.\ vel.\ tracking
  & \makecell[l]{$\exp(-\lVert\mathbf{v}_{xy}-\mathbf{c}_{xy}\rVert^{2}/\sigma^{2})$\\ $\sigma{=}0.5$}
  & $1.0$ & $1.0$ \\
Ang.\ vel.\ tracking
  & \makecell[l]{$\exp(-(\omega_{z}-c_{\omega_{z}})^{2}/\sigma^{2})$\\ $\sigma{=}0.5$}
  & $0.5$ & $0.5$ \\
Alive bonus & $1$ per step & $0.15$ & $0.15$ \\
Flat orientation & $\lVert\mathbf{g}_{xy}\rVert^{2}$ & $-5.0$ & $-5.0$ \\
Joint-position limits
  & \makecell[l]{$\sum_{j}\max(0,q_{j}-q_{j}^{\max})$\\ $\;+\max(0,q_{j}^{\min}-q_{j})$}
  & $-5.0$ & $-5.0$ \\
Undesired contacts
  & $\sum_{b}\mathbf{1}[\lVert\mathbf{f}_{b}\rVert>1\,\text{N}]$ (excl.\ ankles)
  & $-1.0$ & $-1.0$ \\
Feet slide
  & $\sum_{f}\lVert\mathbf{v}_{f}\rVert\,\mathbf{1}[\text{contact}]$
  & $-0.2$ & $-0.2$ \\
Joint deviation (arms)
  & $\sum_{j\in\text{arms}}(q_{j}-q_{j}^{\rm def})^{2}$
  & $-0.1$ & $-0.1$ \\
Joint deviation (waist)
  & $\sum_{j\in\text{waist}}(q_{j}-q_{j}^{\rm def})^{2}$
  & $-1.0$ & $-1.0$ \\
Joint deviation (hip yaw/roll)
  & $\sum_{j\in\text{hip}}(q_{j}-q_{j}^{\rm def})^{2}$
  & $-1.0$ & $-1.0$ \\
$z$-velocity penalty & $v_{z}^{2}$ & $-2.0$ & $-2.0$ \\
Roll/pitch penalty & $\lVert\boldsymbol{\omega}_{xy}\rVert^{2}$ & $-0.05$ & $-0.05$ \\
Joint velocity & $\lVert\dot{\mathbf{q}}\rVert^{2}$ & $-10^{-3}$ & $-10^{-3}$ \\
Joint acceleration & $\lVert\ddot{\mathbf{q}}\rVert^{2}$
  & $-2.5{\times}10^{-7}$ & $-2.5{\times}10^{-7}$ \\
Action rate & $\lVert\mathbf{a}_{t}-\mathbf{a}_{t-1}\rVert^{2}$ & $-0.05$ & $-0.05$ \\
Energy (Unitree) & $\sum_{j}\tau_{j}\,\dot{q}_{j}$
  & $-2{\times}10^{-5}$ & $-2{\times}10^{-5}$ \\
Base height & $(h_{\rm base}-0.78)^{2}$ & $-10.0$ & $-10.0$ \\
Feet gait (Unitree)
  & \makecell[l]{period $0.8$\,s, offset $[0,0.5]$,\\ threshold $0.55$}
  & $0.5$ & $0.5$ \\
Feet clearance (swing)
  & $\sum_{f}(h_{f}-0.1)^{2}\,\mathbf{1}[\text{swing}]$
  & $1.0$ & $1.0$ \\
Ball proximity
  & $\exp(-d_{\rm ball}^{2}/\sigma^{2})$, $\sigma{=}2.0$\,m
  & --- & $-5.0$ \\
Ball collision
  & $\mathbf{1}[\lVert\mathbf{f}_{\rm ball}\rVert>0.1\,\text{N}]$
  & --- & $-20.0$ \\
Ball approaching
  & $\max(0,-\dot{d}_{\rm ball})\cdot\,\mathbf{1}[d_{\rm ball}\le 2\,\text{m}]$
  & --- & $-2.0$ \\
\bottomrule
\end{tabular}
\end{table}

\paragraph{PPO hyperparameters.}
Table~\ref{tab:hw_ppo} lists the PPO settings for the hardware policies. Both
tasks use a $[512,256,128]$ MLP with ELU activations and a Gaussian action
head (init scalar std $1.0$). Compared to the simulation lineage, the
hardware policies use (i) wider networks, (ii) an asymmetric actor--critic,
(iii) a smaller action scale ($\mathbf{q}^{\rm target}=\mathbf{q}^{\rm def}
+0.25\,\mathbf{a}$, vs.\ $0.5$ in simulation), and (iv) a richer domain
randomisation regime (friction sampling, base-mass perturbation, and a
periodic root-velocity push event) to bridge to hardware.

\begin{table}[h]
\centering
\small
\setlength{\tabcolsep}{4pt}
\renewcommand{\arraystretch}{1.15}
\caption{PPO hyperparameters for the $29$-DoF hardware policies.}
\label{tab:hw_ppo}
\begin{tabular}{@{}l l@{}}
\toprule
Hyperparameter & Value \\
\midrule
Actor/critic hidden dimensions & $[512,256,128]$ \\
Activation; action head & ELU; Gaussian, init scalar std $1.0$ \\
Observation normalisation & off (empirical normalisation on) \\
Optimiser
  & \makecell[l]{Adam, lr $10^{-3}$,\\ adaptive schedule (target KL $0.01$)} \\
Discount $\gamma$, advantage $\lambda_{\rm GAE}$ & $0.99$,\; $0.95$ \\
PPO clip / value-loss coef / entropy coef & $0.2$\;/\;$1.0$\;/\;$0.01$ \\
Epochs per update / minibatches per update & $5$\;/\;$4$ \\
Steps per environment per rollout & $24$ \\
Parallel envs (\textsc{G1Flat} / \textsc{G1Collision}) & $8192$\;/\;$4096$ \\
Iterations (\textsc{G1Flat})
  & \makecell[l]{$30\,000$\\ (deployed @ $6\,000$)} \\
Iterations (\textsc{G1Collision})
  & \makecell[l]{$5\,000$\\ (deployed @ $4\,999$)} \\
Warm start (\textsc{G1Collision}) & resume from deployed \textsc{G1Flat} baseline \\
Observation history length & $5$ frames \\
Actor / critic observations
  & \textbf{asymmetric}; critic adds $\mathbf{v}\in\mathbb{R}^{3}$ per frame \\
Domain randomisation
  & \makecell[l]{friction $\in[0.3,1.0]$;\\
                 base mass $+[-1,3]$\,kg;\\
                 push every $5$\,s,\\ $\;v_{x,y}\sim\mathcal{U}[-0.5,0.5]$\,m/s} \\
Random seed & $42$ \\
\bottomrule
\end{tabular}
\end{table}

% \ruic{make sure to be clear about difference in the ppo setting for hardware from simulation (e.g., asymmetrical actor critic observation to accommodate hardware data availability)}

\paragraph{Sim-vs-hardware differences in PPO.}
The hardware lineage diverges from the simulation lineage in four respects.
\emph{(i) Asymmetric actor--critic.} Because the base linear velocity
$\mathbf{v}$ is not reliably observable on the deployed platform, it is
withheld from the actor and provided only to the critic; the actor consumes
$96$ (\textsc{G1Flat}) or $102$ (\textsc{G1Collision}) dimensions per frame
while the critic consumes $99$ and $105$, respectively. \emph{(ii) Observation
history.} Hardware observations are stacked over the last $5$ frames
(simulation uses single-frame observations). \emph{(iii) Action scale.}
Joint-position targets are recovered as $\mathbf{q}^{\rm target}=\mathbf{q}^{\rm def}+0.25\,\mathbf{a}$
on hardware versus $0.5\,\mathbf{a}$ in simulation, halving the action range
to reduce abrupt commands. \emph{(iv) Domain randomisation.} Friction is sampled
in $[0.3,1.0]$, the torso mass is perturbed by $[-1,3]$\,kg, and a periodic
root-velocity push event with $v_{x,y}\!\sim\!\mathcal{U}[-0.5,0.5]$\,m/s is
applied every $5$\,s; the simulation lineage uses no such randomisation.
All other PPO settings (optimiser, $\gamma$, $\lambda_{\rm GAE}$, clip,
value-loss/entropy coefficients, epochs and minibatches) are identical to the
simulation policies.

\subsection{Data Collection for Safety Value Learning}\label{append:data_for_safety_value_learning}

With robot policy $\pi$, the dataset $\cD$ for safety value learning is generated by evaluating $\pi$ in each simulation environment with $2048$ agents for $1000$ steps.
For simulation inference tasks in \Cref{sec:sim_quantitatives}, safety signals are labeled following \Cref{sec:exp_task}.
For hardware inference in \Cref{sec:hardware_quantitatives}, we label the safety signal for \textsc{G1Flat} as $\ell_\mathrm{flat}\defeq \ell_\mathrm{ori}$.
The contact component $\ell_\mathrm{contact}$ in the safety signal for \textsc{G1Collision} is synthesized as $\ell_\mathrm{contact}\defeq 1_{d_\mathrm{ball}\leq 0.3} - 1_{d_\mathrm{ball} > 0.3}$.

\subsection{Safety Value Learning}

Using the dataset $\cD$, we train the $\lambda$-Reachability approach, its ablations, and baselines.
All safety values are represented as neural networks with
% \ruic{fill in network size}.
two hidden layers of width $256$ and ReLU activations, mapping each task's
observation to a scalar value $V^{\pi}(x)\in\mathbb{R}$. The final linear
layer is initialised with zero weights and bias $-2$, so that $V^{\pi}$ starts
strongly safe everywhere. For $\lambda$-Reachability we instantiate two
critics $V^{\pi}_{\theta_{1}}$, $V^{\pi}_{\theta_{2}}$ with this architecture
and report the ensemble mean
$V^{\pi}(x)=\tfrac{1}{2}\!\bigl(V^{\pi}_{\theta_{1}}(x)+V^{\pi}_{\theta_{2}}(x)\bigr)$;
the baselines use a single critic with the same architecture.
Training is done following \Cref{sec:safety_value_training_main}.
Full training parameters for $\lambda$-Reachability and DPE are reported below.

% \ruic{table summarzing all safety value learning hyperparametsr, such as lr, delta, terminal value, Batch, n\_max, tau, etc.}

% Replaces: \ruic{table summarzing all safety value learning hyperparametsr,
%   such as lr, delta, terminal value, Batch, n\_max, tau, etc.}
% Location: Sec. Safety Value Learning, after
%   "Full training parameters for $\lambda$-Reachability and DPE are reported below."
% Requires: \usepackage{makecell} (and booktabs).
\begin{table}[h]
\centering
\small
\setlength{\tabcolsep}{4pt}
\renewcommand{\arraystretch}{1.15}
\caption{Safety-value learning hyperparameters. The main columns report the
configuration used for the hardware-inference lineage; values that change for
the simulation-inference lineage are listed in the last column. For
$\lambda$-Reachability we additionally report ablations
$\lambda\in\{0.99,0.95,0.50,0\}$ in \Cref{sec:sim_quantitatives}; the value
shown here is the default ($\lambda{=}0.99$) used in all main results.}
\label{tab:value_hparams}
\begin{tabular}{@{}l l l l@{}}
\toprule
Hyperparameter & $\lambda$-Reachability & DPE & Sim-inference override \\
\midrule
Critic ensemble
  & \makecell[l]{$2$\\ (clipped double-$Q$)}
  & $1$
  & --- \\
Horizon discount $\lambda$
  & $0.99$ (main)
  & \makecell[l]{annealed\\ $0.9\!\to\!0.99$}
  & --- \\
Per-step survival $\delta$ & $0.99$ & --- & --- \\
Absorption value $v_{\rm term}$ & $-10^{6}$ & --- & --- \\
Max bootstrap horizon $n_{\max}$
  & $200$
  & \makecell[l]{$1$\\ (one-step TD)}
  & --- \\
Hidden dims, activation
  & $[256,256]$, ReLU
  & $[256,256]$, ReLU
  & --- \\
Optimiser, learning rate $\alpha$
  & Adam, $10^{-3}$
  & Adam, $10^{-3}$
  & --- \\
Batch size $B$ & $256$ & $256$ & --- \\
Target Polyak rate $\tau$
  & $0.005$
  & $0.05$
  & \makecell[l]{$\tau{=}0.05$\\ ($\lambda$-reach)} \\
Target update period
  & every step
  & every $10$ steps
  & \makecell[l]{every $10$ steps\\ ($\lambda$-reach)} \\
Loss
  & \makecell[l]{regression\\ $+$ lower-bound hinge\\ $+$ monotonicity hinge\\ $+$ sign BCE ($\alpha_{\rm bce}{=}5$);\\ weights $(1,1,1,1)$}
  & \makecell[l]{MSE on\\ $(1{-}\lambda)\ell$\\ $\;+\,\lambda\max(\ell,V^{-}(x'))$}
  & \makecell[l]{hinge / BCE\\ weights $\to 0$} \\
Total gradient steps & $10\,000$ & $10\,000$ & $2\,000$ \\
Evaluation cadence & every $500$ steps & every $500$ steps & --- \\
Random seed & $0$ & $0$ & --- \\
\bottomrule
\end{tabular}
\end{table}

\subsection{Safety Value Inference}\label{append:safety_value_inference}

% \ruic{push tilt only; contact in real is synthesized by mocap. other signals not available, e.g., root height since it depends on scan point)}.

% \ruic{sim2sim verify, hardware setup, mocap setup, communication, etc.}

Simulation inference is performed by applying the learned $V^\pi$ to the same environments on which the rollout policy was trained, and computing metrics according to \Cref{sec:safety_value_inference_metrics}.
For hardware inference, we collect real data from policy rollouts with artificially generated disturbances.
To record observations for \textsc{G1Collision}, we use an Optitrack motion capture system to stream the poses of both the humanoid base and a ball, which serves as the dynamic obstacle.
The relative position and velocity of the ball in the humanoid base frame are computed online as part of the observation for the safety value function.
The safety values are applied to the collected trajectories and evaluated against labeled safety signals following \Cref{sec:safety_value_inference_metrics}.

\newpage
\section{Hardware Experiment Results}\label{append:additional_hw_results}

In this section, we present additional hardware experimental results on \textsc{G1Flat} and \textsc{G1Collision}, including video screenshots and plots of the safety signal $\ell$ and the inferred safety value $V^\pi$.
In each plot, we mark the timestamp of the external events that perturb the robot towards unsafe regions and observe the first timestamp at which $\lambda$-Reachability and the DPE baseline predict non-recoverable states, and the first timestamp of actual unsafe events.
It is evident that $\lambda$ outperforms the baseline via earlier warning and more accurate value prediction.

% \ruic{add segment video screenshots and plots. key frames should be marked.}
\begin{figure*}[h]
    \centering
    \includegraphics[width=0.99\textwidth]{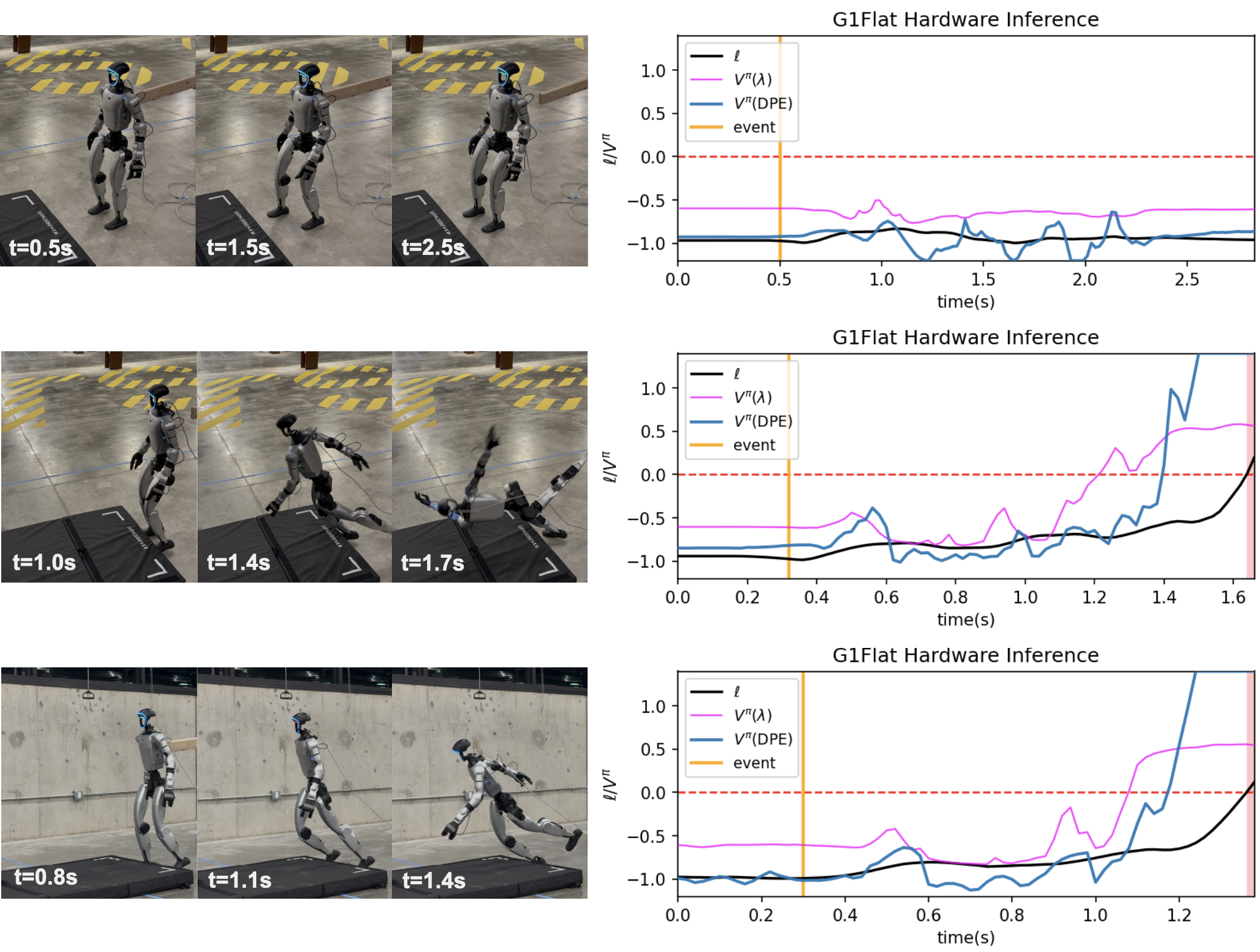}

    \vspace{0.4em}

    \includegraphics[width=0.99\textwidth]{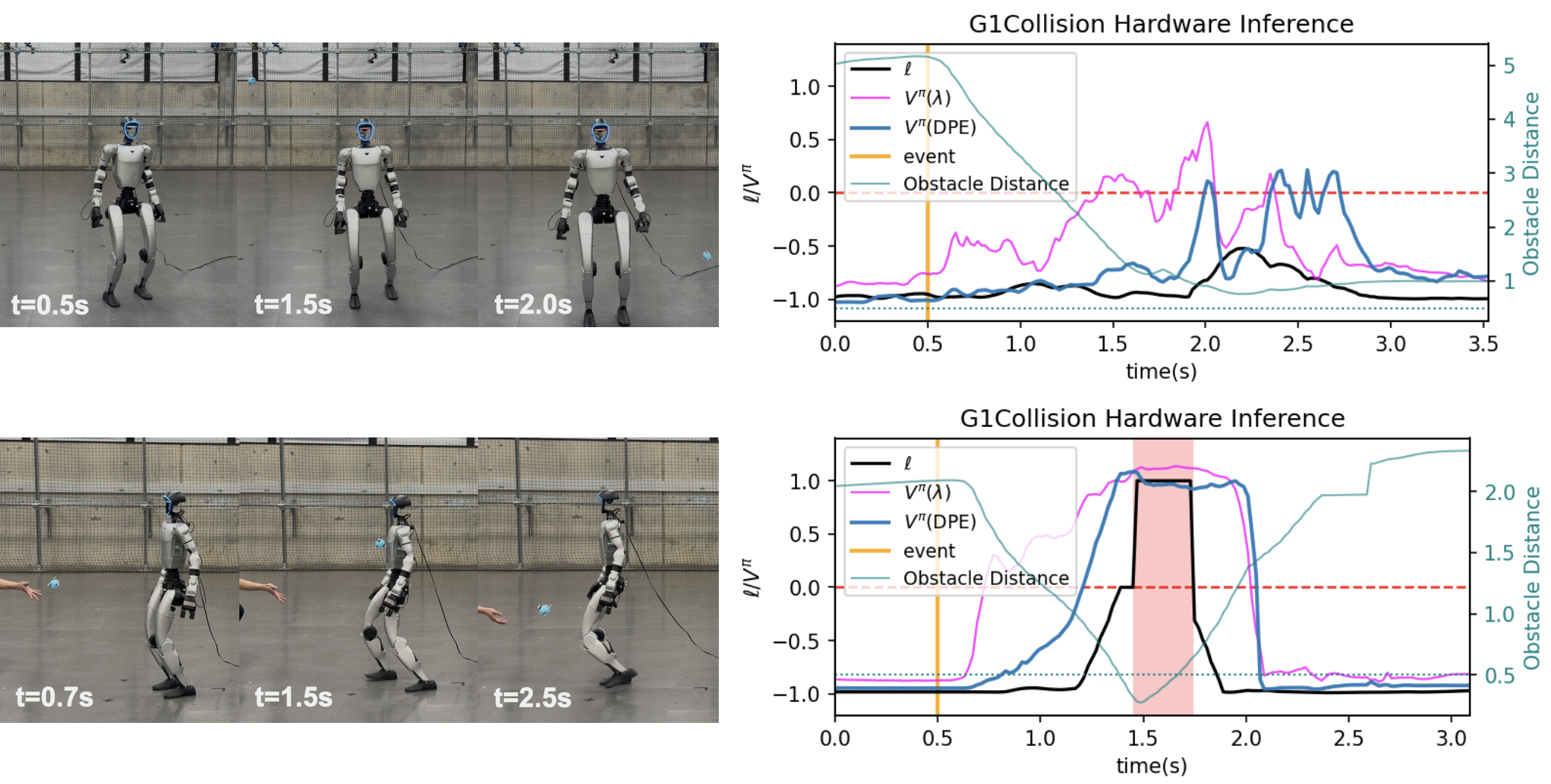}

    \caption{Additional hardware experiment results.}
    \label{fig:hardware_results}
\end{figure*}

\end{document}